\newcommand{\aiversion}[1]{}
\newcommand{\fullversion}[1]{#1}
\newcommand{\shortcite}[1]{\cite{#1}}
\declaretheoremstyle[%
  spaceabove=3pt,%
  spacebelow=10pt,%
  headfont=\normalfont\itshape,%
  qed=\qedsymbol%
]{myproof}
\declaretheorem[name={Proof},style=myproof,unnumbered,
]{prf}
\renewenvironment{proof}{\begin{prf}}{\end{prf}}
\declaretheoremstyle[%
  spaceabove=10pt,%
  spacebelow=8pt,%
  bodyfont=\normalfont\itshape,
]{mytheorem}
\declaretheorem[name={Lemma},style=mytheorem]{lemma}
\declaretheorem[name={Theorem},style=mytheorem]{theorem}
\declaretheorem[name={Proposition},style=mytheorem]{proposition}
\declaretheorem[name={Corollary},style=mytheorem]{corollary}
\declaretheorem[name={Example},style=mytheorem]{example}
\declaretheorem[name={Observation},style=mytheorem]{observation}
\newtheorem{theorem}{Theorem}
\newtheorem{proposition}{Proposition}
\newtheorem{observation}{Observation}
\newtheorem{corollary}{Corollary}
\newtheorem{example}{Example}
\renewcommand{\P}{\text{\normalfont P}}
\newcommand{\NP}{\text{\normalfont NP}}
\newcommand{\PSPACE}{\text{\normalfont PSPACE}}
\newcommand{\FPT}{\text{\normalfont FPT}}
\newcommand{\W}[1]{\text{\normalfont W[#1]}}
\newcommand{\paraNP}{\text{\normalfont para-NP}}
\newcommand{\paraPSPACE}{\text{\normalfont para-PSPACE}}
\newcommand{\Card}[1]{|#1|}
\newcommand{\mtext}[1]{\text{\normalfont #1}}
\newcommand{\CNF}{\mtext{\sc CNF}}
\newcommand{\case}[1]{(\Pi{#1},c)}
\newcommand{\Lsf}{\mtext{\sffamily L}}
\newcommand{\Vsf}{\mtext{\sffamily V}}
\newcommand{\Dsf}{\mtext{\sffamily D}}
\newcommand{\Asf}{\mtext{\sffamily A}}
\newcommand{\Vars}[1]{\mtext{vars\ensuremath{(#1)}}}
\newcommand{\Pre}[1]{\mtext{pre}(#1)}
\newcommand{\Post}[1]{\mtext{post}(#1)}
\newcommand{\restr}{\upharpoonright}
\newcommand{\true}{1}
\newcommand{\false}{0}
\newcommand{\planmod}{\mtext{\sc Plan}\-\mtext{\sc Mod}}
\newcommand{\planmodstar}{\planmod\ensuremath{^{\star}}}
\newcommand{\splanmod}[1]{\ensuremath{\SBs#1\SEs}\mtext{-}\planmod}
\newcommand{\rplanmod}{\ensuremath{R}\mtext{-}\planmod}
\newcommand{\rplanmodstar}{\ensuremath{R}\mtext{-}\planmodstar}
\newcommand{\casemod}{\mtext{\sc Case}\-\mtext{\sc Mod}}
\newcommand{\casemodstar}{\casemod\ensuremath{^{\star}}}
\newcommand{\scasemod}[1]{\ensuremath{\SBs#1\SEs}\mtext{-}\casemod}
\newcommand{\rcasemod}{\ensuremath{R}\mtext{-}\casemod}
\newcommand{\rcasemodstar}{\ensuremath{R}\mtext{-}\casemodstar}
\newcommand{\partitionedclique}{\mtext{{\sc Partitioned}-}\allowbreak{}\mtext{{\sc Clique}}}
\newcommand{\hittingset}{\mtext{{\sc Hitting}-}\allowbreak{}\mtext{{\sc Set}}}
\newcommand{\pwsatcirc}{\ensuremath{p}\mtext{\sc -WSat(CIRC)}}
\newcommand{\LCSI}{\mtext{{\sc Longest}-}\allowbreak{}\mtext{{\sc Common}-}\allowbreak{}\mtext{{\sc Subsequence}-}\allowbreak{}\mtext{\sc I}}
\newcommand{\SB}{\{\,}%
\newcommand{\SM}{\;{:}\;}%
\newcommand{\SE}{\,\}}%
\newcommand{\SBs}{\{}%
\newcommand{\SEs}{\}}%
\begin{document}

\title{Parameterized Complexity Results for Plan Reuse}	

\author{
Ronald de Haan$^{1}$\thanks{Supported by the European Research Council (ERC), project COMPLEX REASON, 239962.} \hspace{10pt}
Anna Roubickova$^{2}$ \hspace{10pt}
Stefan Szeider$^{1\hspace{1pt}*}$\\[10pt]
\mbox{}$^1$Institute of Information Systems,
Vienna University of Technology, Vienna, Austria\\
dehaan@kr.tuwien.ac.at\ \ \ stefan@szeider.net\\[10pt]
\mbox{}$^2$Faculty of Computer Science,
Free University of Bozen-Bolzano, Bolzano, Italy\\
anna.roubickova@stud-inf.unibz.it
}

\date{}

\maketitle%
\begin{abstract}
  Planning is a notoriously difficult computational problem of high
  worst-case complexity.  Researchers have been investing significant
  efforts to develop heuristics or restrictions to make planning
  practically feasible. Case-based planning is a~heuristic
  approach where one tries to reuse previous experience when solving
  similar problems in order to avoid some of the planning
  effort. Plan reuse may offer an interesting
  alternative to plan generation in some settings.

  We provide theoretical results that identify
  situations in which plan reuse is provably tractable.
  We perform our analysis in the framework of parameterized complexity,
  which supports a rigorous worst-case complexity analysis that takes
  structural properties of the input into account in terms of
  parameters. A central notion of parameterized complexity is
  fixed-parameter tractability which extends the classical notion of
  polynomial-time tractability by utilizing the effect of structural
  properties of the problem input.

  We draw a detailed map of the parameterized complexity landscape
  of several variants of problems that arise in the context of
  case-based planning. In particular, we consider the problem of
  reusing an existing plan, imposing various restrictions in terms of
  parameters, such as the number of steps that can be added to the
  existing plan to turn it into a solution of the planning instance at
  hand.
\end{abstract}

\section{Introduction}
Planning is one of the central problems of AI with a wide range of
applications from industry to academics (\citealt{ghallab2004automated}).  Planning gives rise
to challenging computational problems. For instance, deciding whether
there exists a plan for a given planning instance is
\PSPACE{}-complete, and the problem remains at least NP-hard
under various restrictions~(\citealt{Bylander94}).  To overcome this high
worst-case complexity, various heuristics, restrictions, and
relaxations of planning problems have been developed that work
surprisingly well in practical settings (\citealt{hoffmann2001ff,helmert2006fast}).  Among the heuristic
approaches is \emph{case-based planning}, which proceeds from the idea
that significant planning efforts may be saved by reusing previous
solutions (\citealt{KamHen92,Veloso:1994:PLA}).  This approach is based on the assumption that
planning tasks tend to recur and that if the tasks are similar, then
so are their solutions. Empirical evidence suggests that this
assumption holds in many settings, and that the case-based approach
works particularly well if the planning tasks require complex
solutions while the modifications required on the known plans are
considerably small.
 
So far the research on the worst-case complexity of case-based
planning did not take into account the essential assumption that similar planning
tasks require similar solutions. Indeed, as shown by
\shortcite{Liberatore05}, if none of the known solutions are helpful, then
the case-based system needs to invest an effort comparable to
generating the solution from scratch, and hence does not benefit from
the previous experience. There is no way to benefit from the
knowledge of an unrelated solution.  However, the result disregards
the case-based assumptions which are meant to avoid such worst
cases. It seems that the classical complexity framework 
is not well-suited for taking such assumptions into account. 

\paragraph{New Contribution}   
In this paper we provide theoretical results that identify situations in
which the plan reuse of the case-based approach is provably tractable.  We
perform our analysis in the framework of \emph{parameterized
  complexity}, which supports a rigorous worst-case complexity analysis
that takes structural properties of the input into account
(\citealt{DowneyFellowsStege99,Niedermeier06,GottlobSzeider08}).  These
structural properties are captured in terms of \emph{parameters},
which are integer numbers that are small compared to the size of the
total problem input. The theoretical analysis now considers the impact
of these parameters on the worst-case complexity of the considered
problems.  A central notion of parameterized complexity is
\emph{fixed-parameter tractability}, which extends the classical notion
of polynomial-time tractability by utilising the impact of
parameters. Parameterized complexity also provides a hardness theory
that, similar to the theory of NP-completeness, provides strong
evidence that certain parameterized problems are not fixed-parameter
tractable (fixed-parameter \textit{in}tractable).

In the problems we study we are given a planning task together with a
stored solution for a different planning task,
where this solution consists of a plan and an initial state the plan is
applied to. The question is to
modify the existing solution to obtain a solution for the new planning
task. By means of various parameters we control the modifications applied to
the stored solution.  For instance, we can require that the number of
additional planning steps added to fit the stored solution to the new
planning task is small compared to the length of the stored solution.

In order to evaluate the impact of structural properties  on the
overall complexity we use parameters based on the following four
restrictions, each restriction is associated with one of the four
symbols  $\Lsf$, $\Asf$, $\Vsf$, and $\Dsf$. 

\setlist[itemize]{leftmargin=15pt}
\begin{itemize}[itemsep=0pt]
\item[$\Lsf$:] bounds on the number of added planning steps
\item[$\Asf$:] bounds on the size of a specified set of actions from
  which the added planning steps are built (each action from the set
  can be added several times)
\item[$\Vsf$:] bounds on the size of a specified set of variables that
  may be mentioned by the added planning steps
\item[$\Dsf$:] bounds on the size of a specified set of values that
  may be mentioned by the added planning steps
\end{itemize}

\begin{figure}[h!]
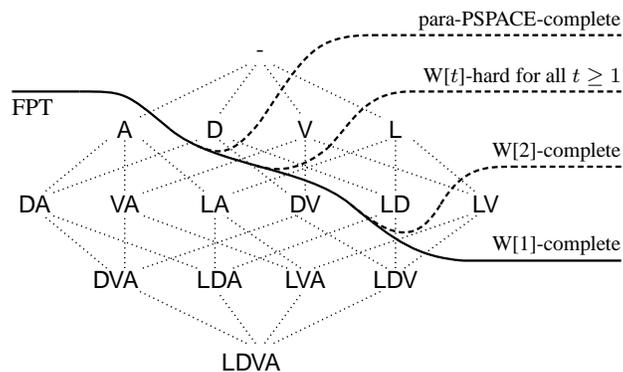

\centering
  
  \vspace{20pt}
  \begin{pgfpicture}{-1cm}{-3cm}{2.5cm}{2cm}
    \newdimen\pubsdim
    \pgfextractx{\pubsdim}{\pgfpoint{5mm}{0mm}}
    \pgfsetxvec{\pgfpoint{6mm}{0mm}}
    \pgfsetyvec{\pgfpoint{0mm}{5mm}}

    \small
    \pgfsetlinewidth{0.2mm}
    \pgfsetdash{{0.15mm}{0.5mm}}{0mm}

    \pgfputat{\pgfxy(0,4)}{\pgfbox[center,center]{-}}

    \pgfline{\pgfxy(-0.3,3.7)}{\pgfxy(-2.7,2.3)} 
    \pgfline{\pgfxy(-0.1,3.7)}{\pgfxy(-0.9,2.4)} 
    \pgfline{\pgfxy(0.1,3.7)}{\pgfxy(0.9,2.4)}   
    \pgfline{\pgfxy(0.3,3.7)}{\pgfxy(2.7,2.3)}   

    \pgfputat{\pgfxy(-3,2)}{\pgfbox[center,center]{\textsf{A}}}
    \pgfputat{\pgfxy(-1,2)}{\pgfbox[center,center]{\textsf{D}}}
    \pgfputat{\pgfxy(1,2)}{\pgfbox[center,center]{\textsf{V}}}
    \pgfputat{\pgfxy(3,2)}{\pgfbox[center,center]{\textsf{L}}}

    \pgfline{\pgfxy(-3.4,1.7)}{\pgfxy(-4.8,0.3)} 
    \pgfline{\pgfxy(-3.0,1.7)}{\pgfxy(-3.0,0.3)} 
    \pgfline{\pgfxy(-2.6,1.7)}{\pgfxy(-1.2,0.3)} 

    \pgfline{\pgfxy(-1.4,1.7)}{\pgfxy(-4.6,0.3)} 
    \pgfline{\pgfxy(-0.9,1.7)}{\pgfxy(0.6,0.3)}  
    \pgfline{\pgfxy(-0.6,1.7)}{\pgfxy(2.6,0.3)}  

    \pgfline{\pgfxy(0.8,1.7)}{\pgfxy(-2.6,0.3)}  
    \pgfline{\pgfxy(1.0,1.7)}{\pgfxy(1.0,0.3)}   
    \pgfline{\pgfxy(1.2,1.7)}{\pgfxy(4.6,0.3)}   

    \pgfline{\pgfxy(2.8,1.7)}{\pgfxy(-0.6,0.3)}  
    \pgfline{\pgfxy(3.0,1.7)}{\pgfxy(3.0,0.3)}   
    \pgfline{\pgfxy(3.4,1.7)}{\pgfxy(4.8,0.3)}   

    \pgfputat{\pgfxy(-5,0)}{\pgfbox[center,center]{\textsf{DA}}}
    \pgfputat{\pgfxy(-3,0)}{\pgfbox[center,center]{\textsf{VA}}}
    \pgfputat{\pgfxy(-1,0)}{\pgfbox[center,center]{\textsf{LA}}}
    \pgfputat{\pgfxy(1,0)}{\pgfbox[center,center]{\textsf{DV}}}
    \pgfputat{\pgfxy(3,0)}{\pgfbox[center,center]{\textsf{LD}}}
    \pgfputat{\pgfxy(5,0)}{\pgfbox[center,center]{\textsf{LV}}}
 
    \pgfline{\pgfxy(-3.4,-1.7)}{\pgfxy(-4.8,-0.3)} 
    \pgfline{\pgfxy(-3.0,-1.7)}{\pgfxy(-3.0,-0.3)} 
    \pgfline{\pgfxy(-2.6,-1.7)}{\pgfxy(0.8,-0.3)} 

    \pgfline{\pgfxy(-1.4,-1.7)}{\pgfxy(-4.6,-0.3)} 
    \pgfline{\pgfxy(-1.0,-1.7)}{\pgfxy(-1.0,-0.3)}  
    \pgfline{\pgfxy(-0.6,-1.7)}{\pgfxy(2.6,-0.3)}  

    \pgfline{\pgfxy(0.6,-1.7)}{\pgfxy(-2.6,-0.3)}  
    \pgfline{\pgfxy(0.9,-1.7)}{\pgfxy(-0.8,-0.3)}  
    \pgfline{\pgfxy(1.2,-1.7)}{\pgfxy(4.6,-0.3)}   

    \pgfline{\pgfxy(2.8,-1.7)}{\pgfxy(0.8,-0.3)}  
    \pgfline{\pgfxy(3.0,-1.7)}{\pgfxy(3.0,-0.3)}  
    \pgfline{\pgfxy(3.2,-1.7)}{\pgfxy(4.8,-0.3)}  

    \pgfputat{\pgfxy(-3.2,-2)}{\pgfbox[center,center]{\textsf{DVA}}}
    \pgfputat{\pgfxy(-0.9,-2)}{\pgfbox[center,center]{\textsf{LDA}}}
    \pgfputat{\pgfxy(1,-2)}{\pgfbox[center,center]{\textsf{LVA}}}
    \pgfputat{\pgfxy(3,-2)}{\pgfbox[center,center]{\textsf{LDV}}}

    \pgfline{\pgfxy(-0.3,-3.7)}{\pgfxy(-2.7,-2.3)} 
    \pgfline{\pgfxy(-0.1,-3.7)}{\pgfxy(-0.9,-2.4)} 
    \pgfline{\pgfxy(0.1,-3.7)}{\pgfxy(0.9,-2.4)}   
    \pgfline{\pgfxy(0.3,-3.7)}{\pgfxy(2.7,-2.3)}   

    \pgfputat{\pgfxy(-0.2,-4.2)}{\pgfbox[center,center]{\textsf{LDVA}}}


    \pgfsetlinewidth{0.3mm}
    \pgfsetdash{}{0mm}

    \pgfxyline(-5.5,3.0)(-4.0,3.0)
    \pgfxycurve(-4.0,3.0)(-3.0,3.0)(-3.0,3.0)(-2.0,2.0)
    \pgfxycurve(-2.0,2.0)(-1.0,1.0)(1.0,1.0)(2.0,0.0)
    \pgfxycurve(2.0,0.0)(3.0,-1.0)(3.5,-1.4)(4.5,-1.5)
    \pgfxyline(4.5,-1.5)(8.0,-1.5)
    \pgfstroke
    \pgfputat{\pgfxy(8.0,4.65)}{\pgfbox[right,bottom]{\footnotesize \paraPSPACE{}-complete}}
    \pgfputat{\pgfxy(-5.5,2.8)}{\pgfbox[left,top]{FPT}}
    \pgfputat{\pgfxy(8.0,3.15)}{\pgfbox[right,bottom]{\footnotesize \W{\ensuremath{t}}-hard for all $t \geq 1$}}
    \pgfputat{\pgfxy(8.0,1.15)}{\pgfbox[right,bottom]{\footnotesize \W{2}-complete}}
    \pgfputat{\pgfxy(8.0,-1.35)}{\pgfbox[right,bottom]{\footnotesize \W{1}-complete}}
    
    \pgfsetlinewidth{0.3mm}
    \pgfsetdash{{0.77mm}{0.48mm}}{0mm}    
    
    \pgfxycurve(-1.4,1.56)(-0.9,1.3)(-0.6,1.3)(0.0,2.2)
    \pgfxycurve(0.0,2.2)(1.0,4.0)(1.2,4.5)(2.5,4.5)
    \pgfxyline(2.49,4.5)(8.0,4.5)


    \pgfxycurve(-0.1,1.03)(0.5,0.8)(1.0,1.0)(1.5,1.5)
    \pgfxycurve(1.5,1.5)(2.0,2.1)(2.3,3.0)(3.32,3.0)
    \pgfxyline(3.31,3.0)(8.0,3.0)

    \pgfxycurve(2.0,0.0)(3.0,-1.0)(3.5,-1.0)(4.0,0.0)
    \pgfxycurve(4.0,0.0)(4.5,1.0)(5.0,1.0)(5.59,1.0)
    \pgfxyline(5.58,1.0)(8.0,1.0)

  \end{pgfpicture}
  \vspace{-10pt}
  \caption{\small{Overview of the fixed-parameter (in)tractability results
  for all combinations of restrictions $\Lsf,\Asf,\Vsf,\Dsf$.}}
  \label{fig:lattice}
\end{figure}

\noindent We show that parameterized by $\Lsf$ the problem is fixed-parameter
intractable even if the additional steps may only be added to the
beginning and end of the plan.
Parameterized by $\Asf$
the problem is fixed-parameter tractable.  Parameterized either by
$\Vsf$ or $\Dsf$, the problem is fixed-parameter intractable; however,
if we combine these two parameters, we achieve fixed-parameter
tractability.
Combining the restriction $\Lsf$ with either $\Vsf$ or $\Dsf$
is not enough to achieve fixed-parameter tractability.

We obtain a full
classification as shown in Figure~\ref{fig:lattice}. In addition, we
show that the same results hold even if we reuse only some ``infix''
of the stored solution, i.e., if it is allowed to discard any number
of actions from the beginning and end of the stored solution before
the modification takes place.  Finally, we prove that in more
general settings where we reuse only the syntactical sequence of
actions as represented by the stored solution (disregarding the actual
states of the stored solution),  all the combinations of parameters
considered yield fixed-parameter intractability.

\section{Preliminaries}
In this section we introduce basic notions and notation related to (case-based) planning and parameterized complexity, which are used further in the paper.

\paragraph{Planning}
In this study, we use the SAS+ planning framework (\citealt{BackstromNebel96}) in the notational variant of \cite{ChenGimenez10}: An instance of the planning problem, or \emph{a planning instance}, is a tuple $\Pi = (V,I,G,A)$,
whose components are described as follows.

\begin{itemize}[itemsep=0pt,leftmargin=*]
\item $V$ is a finite set of variables, where each $v \in V$ has an
  associated finite domain $D(v)$.
  A \emph{state} $s$ is a mapping defined on a set $V$ of  variables  such
  that $s(v) \in D(v)$ for each $v \in V$.  A \emph{partial state} $p$
  is a mapping defined on a subset $\Vars{p}$ of $V$
  such that for all $v \in \Vars{p}$ it holds that $p(v) \in D(v)$. We
  sometimes denote a partial state $p$ by a set of explicit mappings
  $\SB v \mapsto p(v) \SM v \in \Vars{p} \SE$.
\item $I$ is a state called the \emph{initial state}.
\item $G$ is a partial state called the \emph{goal}.
\item $A$ is a set of actions; each action $a \in A$ is of the form $a =
  (\Pre{a} \Rightarrow \Post{a})$, where $\Pre{a}$ is a partial state called
  \emph{precondition}, and $\Post{a}$ is a partial state called
  \emph{postcondition}.
\end{itemize}
  For a (partial) state $s$ and a subset $W \subseteq
    V$, we let $(s \restr W)$ be the (partial) state resulting from
  restricting $s$ to $W$. We say that a (partial) state $s$ is a
  \emph{goal state}, or that $s$ satisfies the goal, if $(s \restr \Vars{G}) = G$.  A \emph{plan}
  (for an instance $\Pi$) is a sequence of actions $p =
  (a_1,\dotsc,a_n)$.
  The \emph{application} of a plan $p$ on a state $s$ yields a state $s[p]$, which is defined inductively as follows.
  The application of an empty plan ($p = \epsilon$) does not change the state ($s[\epsilon] = s$).
  For a non-empty plan $p = (a_1,\dotsc,a_n)$,
  we define $s[p]$ based on the inductively defined state $s[p']$, where $p' = (a_1,\dotsc,a_{n-1})$.
\begin{itemize}[itemsep=0pt,leftmargin=*]
  \item If $(s[p'] \restr \Vars{\Pre{a_n}}) \neq \Pre{a_n}$ then $s[p] = s[p']$,
  i.e., if the precondition of $a_n$ does not hold in $s[p']$, the action $a_n$ is not applicable and does not change the state.
  \item Otherwise, $a_n$ is applicable and $s[p]$ is the state equal to $\Post{a_n}$ on variables $v \in \Vars{\Post{a_n}}$, and equal to $s[p']$ on $v \in V \setminus \Vars{\Post{a_n}}$.
\end{itemize}
A plan $p$ is a \emph{solution plan} if $I[p]$ is a goal state. 

If $\Card{D(v)} \leq 2$ for each $v \in V$, then we have a
\textit{Boolean} (or \textit{binary}) instance, which in fact gives us a
notational variant of the STRIPS planning framework (\citealt{Bylander94}).

Consider the following example instance,
that we will use as a running example in the
remainder of this paper.
\begin{example}
\label{ex:pi}
We let $\Pi = (V,I,G,A)$ be the planning instance defined below.
A solution plan for $\Pi$ would be $p~=~(a_3,a_1,a_2)$.
\[ \begin{array}{r l}
  V & = \SBs v_1,v_2,v_3 \SEs \\[5pt]
  & \quad D(v_1) = \SBs 0,1,2 \SEs \\
  & \quad D(v_2) = D(v_3) = \SBs 0,1 \SEs \\[5pt]
  I & = \SBs v_1 \mapsto 0, v_2 \mapsto 0, v_3 \mapsto 0 \SEs \\
  G & = \SBs v_1 \mapsto 2 \SEs \\
  A & = \SBs a_1, a_2, a_3, a_4 \SEs \\[5pt]
  & \quad a_1 = (\SBs v_1 \mapsto 0, v_2 \mapsto 1 \SEs, \SBs v_1 \mapsto 1 \SEs) \\
  & \quad a_2 = (\SBs v_1 \mapsto 1, v_2 \mapsto 1 \SEs, \SBs v_1 \mapsto 2 \SEs) \\
  & \quad a_3 = (\emptyset, \SBs v_2 \mapsto 1, v_3 \mapsto 1 \SEs) \\
  & \quad a_4 = (\emptyset, \SBs v_3 \mapsto 0 \SEs) \\
\end{array} \]
\end{example}

\paragraph{Case-Based Planning}
\sloppypar
Case-based planning (CBP) is a type of case-based reasoning that involves the use of stored experiences (called \emph{cases}) of solving analogous problems. 
Often, a case is composed of a planning instance $\Pi'~=~(V,J,H,A)$ and a solution plan $c$ of $\Pi'$. The plan $c$ can be replaced by some other information related to the search for a solution to $\Pi'$, e.g., a set of justifications~(\citealt{KamHen92,Veloso:1994:PLA,HanWel95}). A plan library, or a \emph{case base}, is a collection of such cases, constituting the experience of the planner. For more detailed explanation of implementation choices of specific planners we refer to the survey of Spalazzi~(\citeyear{Spalazzi2001}).
\begin{example}\label{ex:case}
Consider the case $\case{'}$, where the planning instance $\Pi'$
coincides with the instance defined in Example~\ref{ex:pi} on $V,A$,
and where $J = \SBs v_1 \mapsto 0, v_2 \mapsto 1, v_3 \mapsto 0 \SEs$
and $H = \SBs v_1 \mapsto 2 \SEs$.
The solution plan $c = (a_1,a_2)$ can be reused to find the solution $p$
given in Example~\ref{ex:pi}.
\end{example}
\noindent When faced with a new problem, the case-based planner follows a sequence of steps common in case-based reasoning~(\citealt{Aamodt94}). First, it queries the library to \emph{retrieve} cases suitable
for reuse. The \emph{reuse} step modifies the retrieved solution(s) to solve the new problem and such a new solution is validated in the \emph{revision} phase by execution, simulated execution, etc. The verified solution may be eventually stored in the case base during a \emph{retention} process.



In this paper, we focus on theoretical properties of plan reuse and therefore we skip the
retrieval, we simply assume that together with the problem to solve we are also given a case which contains a suitable solution for reuse.
In other words, we assume that the instance $\Pi$ to be solved coincides on the sets $V, A$ with the instance $\Pi'$ provided in the case.%
\paragraph{Parameterized Complexity}
Here we introduce the relevant concepts of parameterized complexity theory.
For more details, we refer to the works of \shortcite{DowneyFellows99},
\shortcite{DowneyFellowsStege99},
\shortcite{FlumGrohe06}, \shortcite{Niedermeier06},
and \shortcite{GottlobSzeider08}.

In the traditional setting of considering the complexity of a problem,
the input size $n$ of the instance is the only measure available.
Parameterized complexity is a two-dimensional framework to
classify the complexity of problems based on their input size $n$
and some additional parameter $k$.
An instance of a parameterized problem is a pair $(I,k)$
where $I$ is the main part of the instance,
and $k$ is the \emph{parameter}.
A parameterized problem is \emph{fixed-parameter tractable}
if it can be solved by a fixed-parameter algorithm, i.e.,
if instances $(I,k)$ can be solved in time $O(f(k)n^c)$,
where $f$ is a computable function of $k$,
$c$ is a constant, and $n$ is the size of $I$.
\FPT{} denotes the class of all fixed-parameter tractable decision
problems.
Many problems that are classified as
intractable in the classical setting can be shown to be
fixed-parameter tractable.

Parameterized complexity also offers a \emph{completeness theory},
similar to the theory of \NP{}-completeness.
This allows the accumulation of strong theoretical evidence that
a parameterized problem is not fixed-parameter tractable.
Hardness for parameterized complexity classes is based on \emph{fpt-reductions},
which are many-one reductions where the parameter of one problem
maps into the parameter for the other.
A parameterized problem $L$ is fpt-reducible to another
parameterized problem $L'$ if there is a mapping $R$
from instances of $L$ to instances of $L'$ such that
(i) $(I,k) \in L$ if and only if $(I',k') = R(I,k) \in L'$,
(ii) $k' \leq g(k)$ for a computable function $g$, and
(iii) $R$ can be computed in time $O(f(k)n^c)$
for a computable function $f$ and a constant $c$,
where $n$ is the size of $I$.

Central to the completeness theory is the hierarchy of parameterized complexity classes
$\FPT{} \subseteq \W{1} \subseteq \W{2} \subseteq \dotsm \subseteq \W{P} \subseteq \paraPSPACE{}$,
where all inclusions are believed to be strict.
Each of the classes \W{\ensuremath{t}} for $t \geq 1$ and \W{P}
contains all parameterized problems that can be reduced
to a certain parameterized satisfiability problem
under fpt-reductions.
For instance, for $\W{2}$, the corresponding satisfiability problem
asks whether a given \CNF{} formula has a satisfying assignment
that sets exactly $k$ variables to true.
A sufficient condition for a problem to be hard for the class \paraPSPACE{}
is that the problem is \PSPACE{}-hard for a single value of the parameter
(\citealt{FlumGrohe03}).
There is strong evidence that a parameterized problem
that is hard for any of these intractability classes is not in \FPT{}.

We use the following problems to prove some fixed-parameter intractability results.

\begin{quote}
\partitionedclique{} is a \W{1}-complete problem
(\citealt{FellowsHermelinRosamondVialette09}).
The instances are tuples $(V,E,k)$,
where $V$ is a finite set of vertices
partitioned into $k$ subsets $V_1,\dotsc,V_k$,
$(V,E)$ is a simple graph,
and $1 \leq k$ is a parameter.
The question is whether there exists a $k$-clique
in $(V,E)$ that contains a vertex in each $V_i$.
\end{quote}

\begin{quote}
\hittingset{} is a \W{2}-complete problem
(\citealt{DowneyFellows95}).
The instances are tuples $(S,C,k)$,
where $S$ is a finite set of nodes,
$C$ is a collection of subsets of $S$,
and $1 \leq k \leq \Card{C}$ is a parameter.
The question is whether
there exists a hitting set $H \subseteq S$
such that $\Card{H} \leq k$
and $H \cap c \neq \emptyset$ for all $c \in C$.
\end{quote}

\begin{quote}
\pwsatcirc{} (weighted circuit satisfiability) is a \W{P}-complete problem
(\citealt{DowneyFellows95}).
The instances are pairs $(C,k)$,
where $C$ is a Boolean circuit,
and $1 \leq k$ is a parameter.
The question is whether there exists a satisfying assignment
of $C$ that sets at most $k$ input nodes to true.
\end{quote}

\begin{quote}
\LCSI{} is a parameterized problem that is \W{\ensuremath{t}}-hard for all $t \geq 1$
(\citealt{BodlaenderDowneyFellowsWareham95}).
As input, it takes $k$ strings $X_1,\dotsc,X_k$ over
an alphabet $\Sigma$, and a positive integer $m$.
The parameter is $k$.
The question is whether there is a string $X \in \Sigma^{*}$
of length at least $m$ that is a subsequence
of $X_i$ for all $1 \leq i \leq k$.
\end{quote}

\section{Related Work}
The first paper providing a complexity-theoretical study of plan reuse~(\citealt{Nebel95}) considered so-called \emph{conservative} plan reuse.
Conservative plan reuse maximizes the unchanged part of the known solution.
The authors showed that such a plan reuse is not provably more efficient than plan generation.
Moreover, they show that
identifying what is the maximal reusable part of the stored solution
is an additional source of hardness.

Liberatore~(\citeyear{Liberatore05}) studied the
problem of plan reuse in a different fashion,
interpreting the case (or the case base) as a ``hint" that makes the search for the solution plan more informed.
The complexity results he provides do not improve over the complexity of uninformed plan generation.
He does however give a tractable compilation result
for planning instances that differ from the stored instances
only in a constant number of valuations from the initial state and goal.

The parameterized complexity of planning was first studied by
\shortcite{DowneyFellowsStege99}
and, more recently, by
\shortcite{BackstroemJonssonChenOrdyniakSzeider12,BackstroemJonssonOrdyniakSzeider13}, using the solution
length as the parameter.  The analysis by B\"{a}ckstr\"{o}m et al.
reveals that the planning problem is \W{2}-complete and there exist
fragments that are \W{1}-complete and other fragments that are
fixed-parameter tractable. More specifically, they provide a full
classification of SAS+ planning under all combinations of the P, U, B
and S restrictions introduced by 
\shortcite{BackstromKlein91}, and they provide a full
classification of STRIPS planning under the syntactical restrictions
studied by \shortcite{Bylander94}.

\section{Parameterized Complexity of Plan Reuse}
In this paper, we study the parameterized complexity of reusing a plan. However, as this work is motivated by plan reuse in the context of case-based planning, we exploit assumptions common in the case-based approaches
to ensure that there is a solution at hand that can be reused.
Also, we consider a more specific form of a plan reuse (\emph{case reuse}) and generalizations thereof.

\paragraph{Reusing the Case}
\noindent In its general form, the classical complexity of plan reuse is not better than the one of plan generation. \shortcite{Liberatore05} has shown it to be \PSPACE{}-complete. However, the case-based approach assumes that similar problems have similar solutions (\citealt{Leake96}). This means that cases can either be used to yield a solution by applying only a limited number of modifications or will not be helpful at all in finding a solution. When considering the complexity of plan reuse in the classical setting,
we cannot exclude the worst case in which the case provides no guidance
and where an uninformed search similar to the traditional plan generation is needed.
When using the framework of parameterized complexity instead,
we can capture the computational complexity of reusing a case in those settings
where it can be used to get a solution plan with only a limited amount of modification.


We consider a case $\case{'}$ useful for solving an instance $\Pi$ if $c$
can be modified to a solution plan $p$ for $\Pi$ by means of limited modification.
We define the following template $\casemod$ for the decision problems,
intended to find such useful cases.
\begin{quote}
  $\casemod$
  
  \emph{Instance:} a planning instance $\Pi = (V,I,G,A)$;
  a case $\case{'}$ consisting of an instance $\Pi'~=~(V,J,H,A)$\footnote{In the remainder of the paper, we will often specify an instance $\Pi'$ in the definition above only by its value of $J$. Since $\Pi$ and $\Pi'$ coincide on $V$ and $A$, and the choice of $H$ is not relevant for answering the question, this will suffice for most purposes.} and its solution plan $c = (c_1,\dotsc,c_l)$; a subset of actions $A' \subseteq A$; and an integer $M$.

  \emph{Question:} Does there exist a sequence of actions $(g_1,\dotsc,g_m) \in (A')^m$
  for some $m \leq M$, and does there exist some $0 \leq i \leq m$,
  such that $(g_1,\dotsc,g_i,c_1,\dotsc,c_l,g_{i+1},\dotsc,g_m)$
  is a solution plan for $\Pi$ and $I[(g_1,\dotsc,g_i)] = J$?
\end{quote}
The sequence $g = (g_1,\dotsc,g_m)$ in the definition above can be thought of as the ``glue'' that enables the reuse of the plan $c$ by connecting the new initial state $I$ to the beginning of the case $\case{'}$, using the plan $c$ to reach its goal and connecting it to the goal required by instance $\Pi$. In the following, we will often refer to these action occurrences (or steps) as \emph{glue steps}. Though such a reuse may seem naive, $\casemod$ is in fact implemented and used by CBP system \textsc{FarOff} (\citealt{TonidandelR02}). 

\begin{example}
\label{ex:instance}
Let $\Pi$ be the planning instance from Example~\ref{ex:pi} and $\case{'}$ the case from Example~\ref{ex:case}.
Consider the instance for $\casemod{}$,
given by $(\Pi,(\Pi',c),A',M)$,
where $A' = \SBs a_3,a_4 \SEs$ and $M=3$.
This is a positive instance,
since the solution plan $p = (a_3,a_4,a_1,a_2)$
can be constructed from $c$ by adding
the sequence of actions
$(a_3,a_4)$ from $A'$,
$I[(a_3,a_4)] = J$ and $\Card{(a_3,a_4)} \leq M$.
\end{example}

We will consider a number of different parameterizations for $\casemod$,
where in each case the parameter is intended to capture
the assumption that the plan given in the case is similar to the solution we are looking for.
In order to define these variants, we define the problems $\rcasemod{}$, for any subset $R$ of $\SBs \Lsf, \Vsf, \Dsf, \Asf \SEs$. The choice of the parameterization depends on this set $R$ of restrictions.
\begin{itemize}[itemsep=0pt,leftmargin=*]
  \item If $R$ includes $\Lsf$, we add to the parameterization the allowed maximum length of the glue sequence $g$.
  \item If $R$ includes $\Vsf$, we add to the parameterization the number of variables mentioned in the actions in $A'$.
  \item If $R$ includes $\Dsf$, we add to the parameterization the number of values mentioned in the actions in $A'$.
  \item If $R$ includes $\Asf$, we add to the parameterization the number of actions in $A'$.
\end{itemize}
For instance, the parameter in the problem $\scasemod{\Lsf,\Asf}$ is $k + l$, where $k$ is the maximum length allowed for the sequence of glue steps and $l = \Card{A'}$.

In order to establish the parameterized complexity landscape
for various combinations of these restrictions
as sketched in Figure~\ref{fig:lattice},
we need to prove the following results.
We show that $\rcasemod{}$ is fixed-parameter tractable
for $R \in \SBs \SBs A \SEs, \SBs V,D \SEs \SEs$,
that it is \W{1}-complete for $R = \SBs L,V \SEs$,
that it is \W{2}-complete for $R \in \SBs \SBs L \SEs, \SBs L,D \SEs \SEs$,
that it is \W{\ensuremath{t}}-hard for all $t \geq 1$, for $R = \SBs V \SEs$,
and that it is \paraPSPACE{}-complete for $R = \SBs D \SEs$.
These results are summarized in Table~\ref{table:results}.

\begin{table}[tb]
\newcommand{\wonec}{$\W{1}$-complete}
\newcommand{\wh}{$\W{2}$-hard}
\newcommand{\wc}{$\W{2}$-complete}
\newcommand{\wpc}{$\W{P}$-complete}
\newcommand{\wth}{$\W{\ensuremath{t}}$-hard for all $t \geq 1$}
\newcommand{\paranph}{$\paraNP{}$-hard}
\newcommand{\paranpc}{$\paraNP{}$-complete}
\newcommand{\parapspacec}{$\paraPSPACE{}$-complete}
  \centering
\small{
  \begin{tabular}{@{}l@{\qquad}l@{}} \toprule
    $R $ & $\rcasemod{}$ \\ \midrule
    $\SBs \Asf \SEs$ & \FPT{} \hfill(Thm~\ref{thm:a-casemod}) \\
    $\SBs \Vsf, \Dsf \SEs$ & \FPT{} \hfill(Cor~\ref{cor:vd-casemod}) \\
    $\SBs \Lsf, \Vsf \SEs$ & \wonec{} \hfill(Thm~\ref{thm:lv-casemod}) \\
    $\SBs \Lsf \SEs$ & \wc{} \hfill(Prop~\ref{prop:l-casemod}) \\
    $\SBs \Lsf, \Dsf \SEs$ & \wc{} \hfill(Cor~\ref{cor:ld-casemod}) \\
    $\SBs \Vsf \SEs$ & \wth{}\qquad \hfill(Thm~\ref{thm:v-casemod}) \\
    $\SBs \Dsf \SEs$ & \parapspacec{} \hfill(Thm~\ref{thm:d-casemod}) \\
    \bottomrule
  \end{tabular}
}
  
  \caption{Map of parameterized complexity results.}
  \label{table:results}
  
\end{table}
The modification of a plan (or a case) concerns addition of actions to the plan stored in the case.
One intuitive way to restrict the amount of modification
that is allowed in order to reuse the case
is to restrict the number of allowed additional steps,
resulting in the $\Lsf$ restriction.
It is believed (\citealt{KamHen92}) that the presence of a similar solution, or rather the fact that only $k$ actions need to be added to the stored plan $c$ in order to find the plan $p$, will make the decision problem of existence of $p$ (and also its generation) easier than if no suitable solution $c$ is available. Unfortunately, the following result shows that the corresponding problem $\scasemod{\Lsf}$ remains hard. 
\begin{proposition}
\label{prop:l-casemod}
$\scasemod{\Lsf}$ is \W{2}-complete.
\end{proposition}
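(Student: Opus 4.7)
The plan is to establish both membership in $\W{2}$ and $\W{2}$-hardness of $\scasemod{\Lsf}$.

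For membership, I would give an fpt-reduction to unrestricted SAS+ planning parameterized by plan length, which is $\W{2}$-complete by \shortcite{BackstroemJonssonChenOrdyniakSzeider12}. Given an instance $(\Pi,(\Pi',c),A',M)$ of $\scasemod{\Lsf}$, I construct a planning instance $\Pi^{\star}$ by augmenting $V$ with a fresh Boolean variable $u$, extending $I$ with $u\mapsto 0$, extending $G$ with $u\mapsto 1$, and introducing a single ``macro'' action $\alpha_c$ whose precondition is the total state $J\cup\{u\mapsto 0\}$ and whose postcondition is the total state $J[c]\cup\{u\mapsto 1\}$; the action set of $\Pi^{\star}$ is $A'\cup\{\alpha_c\}$. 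Since $u$ is flipped only by $\alpha_c$, and $\alpha_c$ requires $u=0$, any plan for $\Pi^{\star}$ reaching the new goal uses $\alpha_c$ exactly once; the unique position of $\alpha_c$ determines the split index $i$, and the precondition of $\alpha_c$ forces $I[(g_1,\dotsc,g_i)]=J$. Hence plans of length at most $M+1$ for $\Pi^{\star}$ are in bijective correspondence with positive witnesses of the $\scasemod{\Lsf}$ instance with $m\leq M$. The new parameter $M+1$ is a computable function of the original parameter, so this is an fpt-reduction.

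For $\W{2}$-hardness I would reduce from $\hittingset{}$. Given $(S,C,k)$, I build $\Pi$ over Boolean variables $V=\{v_c:c\in C\}$, with $I$ assigning $0$ to every variable, $G$ requiring $1$ on every variable, and $A=A'=\{a_s:s\in S\}$, where $a_s=(\emptyset,\{v_c\mapsto 1:c\in C,\,s\in c\})$. For the case I take $J=I$, $c=\epsilon$, and any $H$ satisfied by $J$. Setting $M=k$, the requirement $I[(g_1,\dotsc,g_i)]=J=I$ combined with the fact that each $a_s$ strictly increases some variable forces $i=0$, so the question becomes whether some sequence of at most $k$ actions from $A'$ sets every $v_c$ to $1$. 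Because the $a_s$ are precondition-free and idempotent, this holds iff $\{s\in S:a_s\text{ occurs in the sequence}\}$ is a hitting set of size at most $k$ for $(S,C)$. The parameter is preserved, so the reduction is fpt.

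The main technical obstacle will be tying the split index $i$ of the original problem to a concrete position in the target plan of the membership reduction; the combination of the at-most-once gadget on $u$ and the total-state precondition on $\alpha_c$ is what handles this cleanly, so getting the precondition/postcondition of $\alpha_c$ right is the crux. In the hardness direction, the only small worry is whether an empty stored plan $c=\epsilon$ is admissible; under the standard convention used in the preliminaries it is, but if needed $c$ can be replaced by a single no-op action on an auxiliary variable without disturbing the hitting-set encoding.
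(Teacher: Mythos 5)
Your proposal is correct and takes essentially the same approach as the paper: the membership gadget (a fresh flag variable together with a macro action whose precondition is the total state $J$ and whose postcondition is $J[c]$, reducing to $k$-step planning with budget $M+1$) is exactly the paper's construction, and your direct \hittingset{} reduction with $(c,J)=(\epsilon,I)$ is just an unpacked version of the paper's appeal to the \W{2}-hardness of $k$-step planning with complete goal states due to \citex{BackstroemJonssonChenOrdyniakSzeider12}. The only cosmetic difference is that the paper cites that hardness result (instantiated via $(c,J)=(\epsilon,G)$) rather than re-deriving it.
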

\begin{proof}
The result follows from the \W{2}-completeness proof
of finding a solution plan of at most $k$ action occurrences (the $k$-step planning problem)
given by 
\shortcite{BackstroemJonssonChenOrdyniakSzeider12}.
They proved that \W{2}-hardness already holds for complete goal states.
Now, by letting $(c,J) = (\epsilon,G)$,
the $k$-step planning problem directly reduces
to $\scasemod{\Lsf}$.

To show \W{2}-membership, we sketch the following reduction
to the $k$-step planning problem.
We introduce an additional operator
$(J \cup \SBs \star \mapsto 0 \SEs \Rightarrow J[c] \cup \SBs \star \mapsto 1 \SEs)$,
where $\star$ is a fresh variable.
Furthermore, we let $\SBs \star \mapsto 0 \SEs \in I$ and $\SBs \star \mapsto 1 \SEs \in G$,
and we let $k' = k+1$.
It is straightforward to verify that this reduces $\scasemod{\Lsf}$
to the $k'$-step planning problem.
\end{proof}

\noindent Intuitively, the reason of such a result is that the large number of different actions to choose from
is a source of hardness.
\citeauthor{BackstroemJonssonChenOrdyniakSzeider12} showed that for the $k$-step planning problem, complexity results can be improved by considering only planning instances whose actions satisfy the condition of post-uniqueness. 
Similarly, we can require the set of actions $A'$,
from which glue steps can be taken, 
to be post-unique (\citealt{BackstroemJonssonChenOrdyniakSzeider12}).
This parameterized problem is in fact fixed-parameter tractable
(this result follows from Theorem~5 in \cite{BackstroemJonssonChenOrdyniakSzeider12}).


In a similar way, parameterizing directly by the cardinality of $A'$ also provides fixed-parameter tractability:

\begin{theorem}
\label{thm:a-casemod}
$\scasemod{\Asf}$ is in \FPT.
\end{theorem}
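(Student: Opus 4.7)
The plan is to exploit the smallness of $A'$: with $k := \Card{A'}$, every glue step comes from an alphabet of size $k$ and can only read or modify variables in $V' := \bigcup_{a \in A'}(\Vars{\Pre{a}} \cup \Vars{\Post{a}})$. Because the case's plan $c$ is rigidly placed between the glue prefix $(g_1,\dotsc,g_i)$ and the glue suffix $(g_{i+1},\dotsc,g_m)$, the problem decomposes into two independent reachability questions: find a shortest glue prefix taking $I$ to $J$, and a shortest glue suffix taking $J[c]$ to some state satisfying $G$; the instance is positive iff both exist and their lengths sum to at most $M$. Variables outside $V'$ are untouched by any glue sequence and can be handled by direct checks: $I(v) = J(v)$ for $v \in V \setminus V'$, and $(J[c])(v) = G(v)$ for $v \in \Vars{G} \setminus V'$.

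The technical core is to bound the relevant part of the state space on $V'$ by a function of $k$ alone, since $\Card{V'}$ may be large. First, every $v \in V'$ can only ever hold one of at most $2k+4$ values, namely those occurring in $I(v)$, $J(v)$, $(J[c])(v)$, $G(v)$ (if defined), and $\Pre{a}(v), \Post{a}(v)$ for $a \in A'$. Second, I would assign to each $v \in V'$ a \emph{type}, which is the abstract pattern recording which of these $2k+4$ signature slots are defined and which slot values coincide. The number of distinct types is bounded by a computable function $f(k)$ (a Bell-number-style bound on partitions of at most $2k+4$ labelled positions). A routine induction on glue length then shows that two variables with the same type evolve in lock-step under every glue sequence, because action preconditions and postconditions treat them identically up to the signature bijection. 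The glue dynamics are therefore faithfully captured by a \emph{reduced state} that records, for each type, which slot label it currently holds, giving a reduced state space of size at most $g(k) := (2k+5)^{f(k)}$.

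The algorithm then runs two breadth-first searches in the reduced state space: one from the reduced $I$ to the reduced $J$ (for the prefix), and one from the reduced $J[c]$ to any reduced state consistent with $G$ on $V' \cap \Vars{G}$ (for the suffix). It answers yes iff both searches succeed and the sum of the two shortest-path lengths is at most $M$. Preprocessing (computing $V'$, the state $J[c]$, the types, and the reduced initial and target states) is polynomial in the input size $n$, and each BFS runs in time polynomial in $g(k)$ and $k$, yielding overall running time $h(k) \cdot \mtext{poly}(n)$ for some computable $h$, which is \FPT. The main obstacle is formalising the type reduction: one must choose the signature so that both BFS start and target states are correctly encoded (which is why the slots must include $I$, $J$, $J[c]$, and $G$ in addition to the precondition/postcondition values), and then verify by induction that same-type variables remain equivalent under every choice of glue actions, regardless of which steps fire and which do not.
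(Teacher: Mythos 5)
Your proof is correct, but its technical core is genuinely different from (and considerably heavier than) the paper's. You control the search space by quotienting the \emph{variables}: you group the variables touched by $A'$ into boundedly many types according to the coincidence pattern of their signature values, prove a lock-step lemma by induction on the glue length, and run BFS in a reduced state space of size roughly $(2k+5)^{f(k)}$. The paper instead bounds the number of \emph{reachable states} directly: a state reached from a fixed state $s$ by any sequence over $A'$ is determined, for each variable, by the last applicable occurrence of an action writing it, hence by the ordered set of last effective occurrences of the at most $k$ distinct actions used; this gives at most about $(k+1)!$ reachable states, which can be enumerated by forward search (repeatedly applying each of the $k$ actions to every state discovered so far until saturation). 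Both arguments then use the same prefix/suffix decomposition you describe --- reach $J$ exactly from $I$, reach some goal state from $J[c]$, and check that the two shortest lengths sum to at most $M$ --- and both correctly dispose of the variables outside $V'$ by direct comparison. The paper's observation is more elementary, avoids the type machinery and the lock-step induction entirely, and yields a singly-factorial rather than doubly-exponential dependence on $k$; your reduction is sound (applicability is indeed determined by the reduced state since $\Vars{\Pre{a}} \cup \Vars{\Post{a}} \subseteq V'$ for every $a \in A'$), but it pays for a quotient of the full state space when only the reachable fragment, which is already small, matters.
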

\begin{proof}
We have that $k = \Card{A'}$.
Then the number of states $s'$ reachable from any state $s$
by actions from $A'$ is bounded by a function of $k$
and can be enumerated in fixed-parameter
tractable time.
Similar bounds hold for all the states $s''$ reachable
from $s'[c]$ for each such $s'$.
Overall, checking if any of these states $s''$ satisfies the goal state
can thus be done in fixed-parameter tractable time.
\end{proof}

\noindent As a consequence of Theorem~\ref{thm:a-casemod},
we get another fixed-parameter tractability result.

\begin{corollary}
\label{cor:vd-casemod}
$\scasemod{\Vsf,\Dsf}$ is in \FPT.
\end{corollary}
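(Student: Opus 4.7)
The plan is to reduce this to Theorem~\ref{thm:a-casemod} by showing that the parameter for $\scasemod{\Vsf,\Dsf}$ dominates the parameter for $\scasemod{\Asf}$, up to a computable function. Specifically, I want to argue that if we bound the number $v$ of variables and the number $d$ of values mentioned in $A'$, then the cardinality of $A'$ itself is bounded by a computable function of $v+d$.

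To carry this out, I would first observe that every action $a \in A'$ is fully determined by its precondition $\Pre{a}$ and its postcondition $\Post{a}$, each of which is a partial state whose domain is contained in the (at most $v$) variables mentioned in $A'$ and whose image is contained in the (at most $d$) values mentioned in $A'$. Since for every variable $u$ in the set of $v$ relevant variables there are at most $d+1$ possibilities (either $u$ is not in the partial state, or it is mapped to one of at most $d$ relevant values), the number of distinct such partial states is at most $(d+1)^v$. Consequently, the number of distinct actions expressible over these variables and values, and thus $\Card{A'}$, is at most $(d+1)^{2v}$, which is a computable function $f(v,d)$ of the parameter $v+d$.

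With this bound in hand, any instance of $\scasemod{\Vsf,\Dsf}$ with parameter $v+d$ is directly an instance of $\scasemod{\Asf}$ with parameter bounded by $f(v,d)$. Since Theorem~\ref{thm:a-casemod} gives a fixed-parameter algorithm for $\scasemod{\Asf}$ with running time $g(\Card{A'}) \cdot n^c$, composing these yields an overall running time $g(f(v,d)) \cdot n^c$, establishing fixed-parameter tractability. There is no real obstacle here; the only thing to be careful about is making sure the counting argument uses the correct notion of ``values mentioned'' as the union over variables (so that each variable may only be assigned to one of at most $d$ values in any action of $A'$), which is precisely the content of restriction $\Dsf$ as defined earlier in the paper.
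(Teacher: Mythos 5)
Your proposal is correct and follows essentially the same route as the paper: bound $\Card{A'}$ by $(d+1)^{2v}$ (the paper writes $(m+1)^{2k}$) via counting partial states for pre- and postconditions, then invoke Theorem~\ref{thm:a-casemod}. Your write-up is just a more detailed version of the paper's two-sentence argument.
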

\begin{proof}
If the set $A'$ of actions refers to at most $k$ variables
and at most $m$ values,
then the number of different actions that $A'$ can possibly
contain is bounded by $(m+1)^{2k}$.
The result then follows from Theorem~\ref{thm:a-casemod}.
\end{proof}

\noindent The fact that the above results are the only fixed-parameter tractable results
under the considered restrictions suggests that plan reuse is not the answer to the high computational complexity of planning in general. However, we can use these results to identify settings in which plan reuse is likely to perform well.
For example, Theorem~\ref{thm:a-casemod} suggests that replanning in case of an execution failure is tractable to implement as plan reuse, provided that the number of applicable actions is limited due to, e.g., limited resources.

These results for plan reuse as implemented in case-based planning are quite unpleasant as in such settings usually $A = A'$. Additionally, $|A|$ tends to be quite high, as the set of actions is obtained by grounding a set of (few) operators (propositional implication rules) over a set of potentially many objects, giving a rise to a rich set of actions which only very rarely satisfies the condition of post-uniqueness to make $\scasemod{\Lsf}$ fixed-parameter tractable. Nevertheless, these claims suggest that, besides identifying \emph{where} to apply the glue steps, a case-based planning system needs to employ heuristics to identify \emph{which} glue steps may be useful.
Even though $\scasemod{\Vsf,\Dsf}$ is in \FPT, parameterizing only on
the number of variables occurring in actions in $A'$,
or only on the number of values occurring in actions in $A'$,
yields fixed-parameter intractability.

\begin{theorem}
\label{thm:v-casemod}
$\scasemod{\Vsf}$ is \W{\ensuremath{t}}-hard for all $t \geq 1$.
\end{theorem}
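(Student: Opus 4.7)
The plan is to reduce \LCSI{} to $\scasemod{\Vsf}$, exploiting the fact that the parameter bounds only the variables mentioned by the glue actions in $A'$, while the total number of such actions and their domain sizes may be polynomial in the input. Given an \LCSI{} instance with strings $X_1,\ldots,X_k$ over alphabet $\Sigma$ and target length $m$, parameterized by $k$, I would construct a case-reuse instance whose glue actions simulate the joint selection of a common subsequence: each glue action ``advances'' through the strings by committing to one character of the candidate subsequence at a time.

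Concretely, I would introduce variables $y_1,\ldots,y_k$ with $D(y_i)=\{0,\ldots,|X_i|\}$ tracking the current matched position in $X_i$; a control variable $z$ with $D(z)=\{\text{done}\}\cup(\Sigma\times\{1,\ldots,k+1\})$; and a round counter $n$ with $D(n)=\{0,\ldots,m\}$. The set $A'$ would contain three families: a ``pick'' action for each $(\sigma,t)$ with $t<m$ mapping $\{z=\text{done},\,n=t\}$ to $\{z=(\sigma,1),\,n=t{+}1\}$; an ``update'' action for each $(i,\sigma,p)$ such that some $q>p$ satisfies $X_i[q]=\sigma$, mapping $\{z=(\sigma,i),\,y_i=p\}$ to $\{z=(\sigma,i{+}1),\,y_i=q\}$ (take $q$ minimal); and a ``finish'' action for each $\sigma$ mapping $\{z=(\sigma,k{+}1)\}$ to $\{z=\text{done}\}$. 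I would set $I=\{y_i\mapsto 0,\,z\mapsto\text{done},\,n\mapsto 0\}$, $G=\{n\mapsto m\}$, $M=m(k+2)$, take $c=\epsilon$, and set $J=I$ (so the $i$ in the problem definition is forced to $0$, because every $A'$-action applicable at $I$ changes the state).

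Correctness follows by a round-structure argument: any glue sequence of length at most $M$ driving $n$ from $0$ to $m$ must consist of exactly $m$ well-formed rounds (pick, $k$ updates, finish), each of which commits to some character $\sigma_j$ and simultaneously advances every $y_i$ to the next occurrence of $\sigma_j$ in $X_i$; such a sequence exists iff $(\sigma_1,\ldots,\sigma_m)$ is a common subsequence of $X_1,\ldots,X_k$. The parameter grows from $k$ to $k+2$ (the variables in $A'$), and the construction is polynomial, so this is an fpt-reduction. The main obstacle is keeping the variables of $A'$ bounded by $O(k)$ while still enforcing the correlation between the $k$ position-updates and the committed character within one round; this is handled by folding both the current string-index $i$ and the chosen character $\sigma$ into the single control variable $z$, letting the rich action preconditions, rather than extra variables, carry the combinatorial load.
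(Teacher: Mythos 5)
Your reduction is from \LCSI{}, exactly as in the paper, and the underlying idea is the same: per-string position variables that only advance rightward, a shared control mechanism forcing all $k$ strings to commit to the \emph{same} character in each round, and a counter that records how many rounds have been completed. The difference is purely in the gadgetry: the paper uses $2k$ extra variables ($s_i$ holding the symbol just read and $t_i$ holding a none/read/used status, so $3k+1$ variables in total), whereas you fold the committed character and the round-robin index into a single control variable $z$ with a polynomial-size domain, getting $k+2$ variables. That is a legitimate and slightly tighter encoding; since only $\Vsf$ is parameterized, the large domain of $z$ costs nothing.

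There is, however, a genuine (if small) bug in the construction as written: you increment the counter $n$ in the \emph{pick} action, i.e., at the start of a round, before the $k$ update actions have verified that the chosen character $\sigma$ actually occurs (after the current positions) in every string. Consequently, after $m-1$ fully verified rounds the state has $z=\text{done}$ and $n=m-1$, and the action $\mathrm{pick}(\sigma,m-1)$ is applicable for \emph{any} $\sigma\in\Sigma$; it immediately sets $n=m$ and satisfies $G=\{n\mapsto m\}$ without the final character ever being checked. So your instance is a yes-instance iff the strings have a common subsequence of length $m-1$, not $m$, and the reduction is incorrect as stated. The fix is trivial: move the increment of $n$ from the pick action to the finish action (give finish a precondition $n=t$ and postcondition $n=t+1$), or equivalently have pick leave $n$ untouched. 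Your round-structure soundness argument then goes through: $n$ can only reach $m$ after $m$ completed rounds, each of which advances every $y_i$ past an occurrence of the same $\sigma_j$, yielding a common subsequence of length $m$; completeness follows from the greedy leftmost matching, which is compatible with your choice of minimal $q$. The remark about forcing $i=0$ is unnecessary (the problem only asks for \emph{some} $i$, and $i=0$ always satisfies $I[\epsilon]=J=I$), and note that under the paper's semantics inapplicable actions are no-ops rather than state-changing, but neither point affects correctness.
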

\begin{proof}
We prove the result by giving an fpt-reduction from $\LCSI{}$,
which is \W{\ensuremath{t}}-hard for all $t \geq 1$.
Let the strings $X_1,\dotsc,X_k$ over the alphabet $\Sigma$
and the integer $m$ constitute an instance of $\LCSI{}$.
For a string $X$ of length $l$
we write $X[0]\!\dotsc{}X[l-1]$.
For each $X_i$ we let $l_i = \Card{X_i}$.
We construct an instance of $\scasemod{\Vsf}$
specified by $\Pi = (V,I,G,A)$, $(c,J)$, $A'$ and $M$.
We let $(c,J) = (\epsilon,G)$, $A' = A$
and $M$ be a sufficiently large number
(that is, $M \geq \sum_{1 \leq i \leq k}\Card{X_i} + (k+1)m + k$).
Also, we define:
\[ \begin{array}{r l}
  V = & \SBs v_1,\dotsc,v_k,s_1,\dotsc,s_k,t_1,\dotsc,t_k,w \SEs; \\
 D(v_i) = & \SBs 0,\dotsc,l_i \SEs; \\
 D(s_i) = & \Sigma \cup \SBs \star \SEs; \\
 D(t_i) =  & \SBs \mtext{none},\mtext{read},\mtext{used} \SEs; \\
 D(w) = & \SBs 0,\dotsc,m \SEs; \\
 A = & A_{\mtext{skip}} \cup A_{\mtext{read}} \cup A_{\mtext{check}} \cup A_{\mtext{finish}}; \\
 A_{\mtext{skip}} = & \SB (\SBs v_i \mapsto u, t_i \mapsto \mtext{none} \SEs \Rightarrow \SBs v_i \mapsto u+1, \\
      &  t_i \mapsto \mtext{none} \SEs), (\SBs v_i \mapsto u, t_i \mapsto \mtext{used} \SEs \Rightarrow \\
      & \SBs v_i \mapsto u+1, t_i \mapsto \mtext{none} \SEs) \SM 1 \leq i \leq k, \\
      & 0 \leq u < l_i \SE; \\
  A_{\mtext{read}} = & \SB (\SBs v_i \mapsto u, t_i \mapsto \mtext{none} \SEs \Rightarrow \SBs s_i \mapsto X_i[u], \\
      & t_i \mapsto \mtext{read} \SEs) \SM 1 \leq i \leq k, 0 \leq u < l_i \SE; \\
  A_{\mtext{check}} = & \SB (\SBs t_1 \mapsto \mtext{read}, \dotsc, t_k \mapsto \mtext{read}, s_1 \mapsto \sigma, \dotsc, \\
      & s_k \mapsto \sigma, w = u \SEs \Rightarrow \SBs t_1 \mapsto \mtext{used}, \dotsc, \\
      & t_k \mapsto \mtext{used}, w \mapsto u+1 \SEs) \SM 0 \leq u < m, \\
      & \sigma \in \Sigma \SE; \\
  A_{\mtext{finish}} = & \SB (\SBs v_i \mapsto l_i \SEs \Rightarrow \SBs t_i \mapsto \mtext{none}, s_i \mapsto \star \SEs) \SM \\
      & 1 \leq i \leq k \SE; \\
  I = & \SB v_i \mapsto 0, s_i \mapsto \star, t_i \mapsto \mtext{none} \SM 1 \leq i \leq k \SE\ \cup \\
      & \SBs w \mapsto 0 \SEs; \mtext{ and} \\
  G = & \SB v_i \mapsto l_i, s_i \mapsto \star, t_i \mapsto \mtext{none} \SM 1 \leq i \leq k \SE\ \cup \\
      & \SBs w \mapsto m \SEs. \\
 \end{array} \]
Note that $\Card{V} = 3k+1$.

The idea behind the reduction is that any solution plan
that results in an assignment of variable $w$ to any $d \geq 1$
corresponds to a witness that the strings $X_1,\dotsc,X_k$
have a common subsequence of length $d$.
The variables $v_1,\dotsc,v_k$ correspond to the position of reading heads
on the strings that can only move from left to right,
and the variables $s_1,\dotsc,s_k$ are used to
read symbols in the string on the position of the reading heads.
The variables $t_1,\dotsc,t_k$ are used to ensure that each symbol
is read at most once (each symbol is either read by using an action in $A_{\mtext{read}}$
or skipped by using an action in $A_{\mtext{skip}}$).
Then the variable $w$ can only be increased if in all strings
the same symbol is read
(by using an action in $A_{\mtext{check}}$).
The actions $A_{\mtext{finish}}$ are used to be able to enforce a complete goal state.

It is now straightforward to verify that there exists a common subsequence
$X$ for $X_1,\dotsc,X_k$ of length $m$ if and only if
the constructed instance is a yes-instance. 
\end{proof}

\noindent As mentioned above in the preliminaries,
if we restrict the planning instances to Boolean values,
we get a framework corresponding to the STRIPS planning framework.
By the fact that $\scasemod{\Vsf,\Dsf}$ is fixed-parameter tractable,
we get that $\scasemod{\Vsf}$ for STRIPS instances
is also fixed-parameter tractable.

By naively keeping track
of all states reachable from the initial state (which are at most $n^k$ many,
for $n = \Card{D}$ and $k = \Card{V}$)
we get that $\scasemod{\Vsf}$ can be solved in polynomial time
for each constant value of $k$.
As a consequence, the following theorem shows
that $\scasemod{\Dsf}$ is of higher complexity
than $\scasemod{\Vsf}$
(unless $\P{}=\PSPACE{}$).

\begin{theorem}
\label{thm:d-casemod}
$\scasemod{\Dsf}$ is \paraPSPACE{}-complete.
\end{theorem}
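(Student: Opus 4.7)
The plan is to establish \paraPSPACE{}-membership and \paraPSPACE{}-hardness separately.

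For membership, I would observe that the underlying decision problem is in \PSPACE{} regardless of the parameter, which immediately puts it in \paraPSPACE{}. Using $\PSPACE{} = \mtext{NPSPACE}$ (Savitch), one can nondeterministically guess a split index $i$ and a total length $m \leq M$, then simulate step by step a glue prefix $(g_1,\dotsc,g_i)$ from $I$ ending in $J$ using actions from $A'$, deterministically apply $c$ to obtain the intermediate state $J[c]$, and simulate a glue suffix ending in a goal state using actions from $A'$. Only the current state, a pointer into $c$, and two binary counters for the remaining step budgets need to be kept, so the whole simulation fits in polynomial space even if $M$ is encoded in binary.

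For hardness, by the sufficient condition cited in the preliminaries it suffices to exhibit \PSPACE{}-hardness for a single fixed value of the parameter. I would reduce from plan existence for Boolean (STRIPS) planning, which is \PSPACE{}-complete by \shortcite{Bylander94}. Given a STRIPS instance $\Pi = (V,I,G,A)$, construct the $\scasemod{\Dsf}$ instance with the same $\Pi$, case $(c, J) = (\epsilon, I)$, $A' = A$, and $M = 2^{\Card{V}}$ written in binary. Because every variable is Boolean, the actions of $A'$ mention only the two values $0$ and $1$, so the parameter equals $2$. Since $c = \epsilon$, the constraint $I[(g_1,\dotsc,g_i)] = J = I$ is trivially met by taking $i = 0$, and hence the $\casemod$ instance is positive iff $\Pi$ admits a solution plan of length at most $2^{\Card{V}}$; by the standard cycle-elimination argument, every solvable STRIPS instance has such a plan.

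The main conceptual point to verify is that $M$ is indeed permitted to be encoded in binary: otherwise the problem would lie in \paraNP{} (plan length would be polynomial in the input), contradicting the claimed \paraPSPACE{}-completeness. Once this is settled, both directions are syntactically straightforward, and the reduction is trivially an fpt-reduction since the parameter is a fixed constant independent of the input.
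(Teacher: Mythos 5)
Your proof is correct and follows essentially the same route as the paper: membership via the unparameterized problem being in \PSPACE{}, and hardness via the observation that for Boolean instances the parameter is fixed at $2$, reducing plan existence with an empty stored plan and invoking the single-parameter-value criterion of \shortcite{FlumGrohe03}. The only (harmless) differences are that you set $J = I$ and take $i=0$ where the paper sets $J = G$, and that you make explicit the binary encoding of $M$, a point the paper leaves implicit.
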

\begin{proof}
The \paraPSPACE{}-membership result follows from the fact that
$\scasemod{\Dsf}$, when unparameterized,
is in \PSPACE{} (\citealt{BackstromNebel96}).

For the hardness result,
consider the case where the number $k$ of values allowed in the
set of actions $A'$ is $2$.
The problem then reduces to the problem of finding a solution plan
for the Boolean planning instance $\Pi$,
in case we let $(c,J) = (\epsilon,G)$.
\shortcite{BackstromNebel96} showed
that finding a solution plan for Boolean planning instances
(even for complete goal states) is \PSPACE{}-hard.
Since this hardness result holds already for a single value of $k$,
the \paraPSPACE{}-hardness result follows (\citealt{FlumGrohe03}).
\end{proof}

\noindent Parameterizing on the combination of the number of allowed additional
steps together with either the number of variables or the number of values
occurring in actions in $A'$ is not enough to ensure fixed-parameter tractability.

\begin{theorem}
\label{thm:lv-casemod}
$\scasemod{\Lsf,\Vsf}$ is \W{1}-complete.
\end{theorem}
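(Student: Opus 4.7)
The plan is to prove $\W{1}$-completeness by giving an fpt-reduction from $\partitionedclique{}$ for hardness and an fpt-reduction to weighted $3$-$\CNF{}$-$\SAT{}$ for membership.

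For the hardness direction, given a $\partitionedclique{}$ instance $(V_G, E_G, k)$ with partition $V_G = V_1 \uplus \dotsm \uplus V_k$, I would build a planning instance whose $A'$-variables are: a variable $x_i$ for each $i \in [k]$, with $D(x_i) = V_i \cup \SBs \star \SEs$, intended to record which vertex of $V_i$ has been selected (initially $\star$); and a Boolean witness $y_{ij}$ for each pair $1 \leq i < j \leq k$, initially $0$, certifying that the pair $\SBs i, j \SEs$ has been covered by a clique edge. The goal requires $y_{ij} = 1$ for every pair. For each edge $\SBs u, v \SEs \in E_G$ with $u \in V_i$, $v \in V_j$, and $i < j$, I include four action variants, one for each combination of $x_i$ and $x_j$ being $\star$ or already equal to $u, v$ respectively; each variant has matching precondition on $\SBs x_i, x_j \SEs$ and postcondition $\SBs x_i \mapsto u, x_j \mapsto v, y_{ij} \mapsto 1 \SEs$ restricted to the variables it actually modifies. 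I take $A' = A$, $(c, J) = (\epsilon, I)$, and $M = \binom{k}{2}$. The target parameter is then $\binom{k}{2} + (k + \binom{k}{2}) = O(k^2)$, a function of~$k$.

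To verify correctness, note that each edge action sets exactly one $y_{ij}$, so a plan of length at most $\binom{k}{2}$ can satisfy the goal only by playing exactly one edge action per pair. The action preconditions then force all edge actions touching $x_i$ to agree on its value, so the $\binom{k}{2}$ chosen edges connect a single $v_i^{\star} \in V_i$ for each $i \in [k]$ into a $k$-clique. Conversely, any $k$-clique is realized by playing its $\binom{k}{2}$ edges in an arbitrary order, picking at each step the variant whose precondition matches the current state.

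For the membership direction, I would encode the problem as weighted $3$-$\CNF{}$-$\SAT{}$. The Boolean variables record (a) which action of $A'$ is chosen at each of the $\leq L$ glue positions, (b) the value of each $v \in V'$ at each of the $O(L)$ time points along the plan (restricted to a polynomial-size set of candidate values per variable, namely those appearing in $I$, $J$, $J[c]$, $G$, or in postconditions of $A'$), and (c) the split position $i$. Width-$3$ clauses encode action preconditions, state transitions consistent with the chosen action, and pairwise at-most-one constraints per variable group; the exact-weight constraint equal to the number of groups, together with the at-most-one clauses, enforces exactly one choice per group. Since the clause width is constant and the weight is $O(Ll)$, this is a valid fpt-reduction. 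The main obstacle is the hardness reduction: the four-variant encoding of each edge (which avoids disjunctive preconditions) combined with the Boolean witnesses $y_{ij}$ is what lets the single length bound $\binom{k}{2}$ simultaneously enforce ``exactly one edge per pair'' and the consistency of vertex selections across edges.
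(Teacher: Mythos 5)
Your hardness reduction is correct and is essentially the paper's: both reduce from $\partitionedclique{}$ via selection variables $x_i$ with domain $V_i\cup\SBs\star\SEs$ and Boolean pair\hyp{}witnesses $y_{i,j}$, so that the glue-length budget forces one edge action per pair and the preconditions force all chosen edges to agree on a single vertex per part. The paper separates guessing, checking and cleanup into distinct actions (budget $2k+\binom{k}{2}$, complete goal state), whereas you fold vertex selection into the edge actions (four precondition variants each, budget $\binom{k}{2}$, partial goal); both are valid fpt-reductions with parameter $O(k^2)$.

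For membership the paper reduces to a first-order model checking problem (following Theorem~4 of B\"ackstr\"om et al.), while you target weighted $3$-$\CNF$-$\SAT$; that is a legitimate alternative, and the exact-weight/at-most-one trick for enforcing one choice per group is fine. However, your blanket claim that all constraints have width $3$ hides a real gap. In this paper's semantics an action whose precondition fails is a \emph{no-op}, not a failure, so the faithful effect axiom is ``if $a$ is chosen at step $t$ and its precondition holds, then its postcondition holds at $t+1$''; as a clause this has width $2+\Card{\Vars{\Pre{a}}}$, bounded only by the parameter, and weighted CNF-SAT with parameter-dependent clause width only gives you $\W{2}$-membership --- worthless here, since $\scasemod{\Lsf}$ alone is already $\W{2}$-complete. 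You must first observe that an inapplicable glue step is a no-op and can be deleted (the problem only requires $m\leq M$), so WLOG every glue action is applicable, after which preconditions become width-$2$ clauses. Relatedly, your frame axioms must be the per-action kind (``if $a$ at $t$ and $v\notin\Vars{\Post{a}}$ and $v=d$ at $t$, then $v=d$ at $t+1$''), not explanatory frame axioms, whose width equals the number of actions affecting $v$ and is polynomial. Finally, $c$ sits in the middle of the solution and has unbounded length, so it cannot be given time points; you must use the requirement $I[(g_1,\dotsc,g_i)]=J$ to treat $c$ as a precomputed jump from $J$ to $J[c]$, checking $G$ and the agreement with $J$ outside the $A'$-variables directly in the reduction. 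With these repairs your route goes through.
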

\begin{proof}
\W{1}-membership can be proven analogously
to the \W{1}-membership proof
given by \citeauthor{BackstroemJonssonChenOrdyniakSzeider12}
(\citeyear{BackstroemJonssonChenOrdyniakSzeider12}, Theorem~4)
for the $k$-step planning problem
restricted to actions with one postcondition.
In this proof the problem is reduced
to a certain first-order model checking problem.


For the hardness result, we reduce from the \W{1}-complete problem $\partitionedclique{}$.
Let $(V,E,k)$ be an instance of $\partitionedclique{}$,
where $V$ is partitioned into $V_1,\dotsc,V_k$.
We define the instance $(\Pi,(\Pi',c),A',k')$ of $\scasemod{\Lsf,\Vsf}$
as follows: $\Pi = (W,I,G,A)$,
$\Pi'$ is specified by its initial state $J$,
$(c,J) = (\epsilon,G)$, $k' = 2k + \binom{k}{2}$ and $A' = A$.
We define:
\[ \begin{array}{r l}
  W = & \SBs x_1,\dotsc,x_k \SEs \cup \SB y_{i,j} \SM 1 \leq i < j \leq k \SE; \\
  D(x_i) = & V_i \cup \SBs \star \SEs \mtext{ for all $x_i$ (and arbitrary $\star \not\in V$);} \\
  D(y_{i,j}) = & \SBs \false,\true \SEs \mtext{ for all $y_{i,j}$;} \\
  A = & \SB \mtext{guess}^i_d, \mtext{clear}^i_d \SM 1 \leq i \leq k, d \in V_i \SE\ \cup \\[2pt]
      & \SB \mtext{check}^{v,w}_{i,j} \SM 1 \leq i < j \leq k, v \in V_i, \\[2pt]
      & w \in V_j, \SBs v, w \SEs \in E \SE; \\
  \mtext{guess}^i_d = & (\emptyset \Rightarrow \SBs x_i \mapsto d \SEs), \mtext{ for each $\mtext{guess}^i_d$;} \\
  \mtext{clear}^i_d = & (\emptyset \Rightarrow \SBs x_i \mapsto \star \SEs), \mtext{ for each $\mtext{clear}^i_d$;} \\[2pt]
  \mtext{check}^{v,w}_{i,j} = & (\SBs x_i \mapsto v, x_j \mapsto w \SEs \Rightarrow \SBs y_{i,j} \mapsto \true \SEs), \\
      & \mtext{ for each $\mtext{check}^{v,w}_{i,j}$\!;} \\[2pt]
  I = & \SB x_i \mapsto \star \SM 1 \leq i \leq k \SE\ \cup \\
      & \SB y_{i,j} \mapsto \false \SM 1 \leq i < j \leq k \SE; \mtext{ and} \\
  G = & \SB x_i \mapsto \star \SM 1 \leq i \leq k \SE\ \cup \\
      & \SB y_{i,j} \mapsto \true \SM 1 \leq i < j \leq k \SE. \\
\end{array} \]

The intuition behind the reduction is as follows.
The budget of $k'$ actions allows for $k$ guessing steps,
to set the variables $x_i$ using actions $\mtext{guess}^i_d$;
$\smash{\binom{k}{2}}$ verification steps, to set the variables $y_{i,j}$ to $1$
using actions $\mtext{check}^{v,w}_{i,j}$;
and $k$ cleanup steps,
to reset the variables $x_i$ using actions $\smash{\mtext{clear}^i_d}$.
The only way to achieve the goal state is by guessing
a $k$-clique.

It is now straightforward to verify that the graph $(V,E)$ has a $k$-clique
if and only if there exist plans $p,p'$ of total length $k'$
such that $I[p] = J$ and $J[c][p']$ satisfies $G$.
\end{proof}

\begin{corollary}
\label{cor:ld-casemod}
$\scasemod{\Lsf,\Dsf}$ is \W{2}-complete.
\end{corollary}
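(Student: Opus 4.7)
The plan is to derive both containment and hardness directly from Proposition~\ref{prop:l-casemod}, so no fresh combinatorial construction is needed.

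For the \W{2}-membership part, I would observe that the parameter of $\scasemod{\Lsf,\Dsf}$ is $k+d$, where $k$ is the length bound on the glue sequence and $d$ is the number of values mentioned by actions in $A'$, whereas the parameter of $\scasemod{\Lsf}$ is just $k$. Adding the $d$ component to the parameter can only give us more information, so the fpt-reduction from $\scasemod{\Lsf}$ to the $k$-step planning problem used in the proof of Proposition~\ref{prop:l-casemod} works unchanged as an fpt-reduction from $\scasemod{\Lsf,\Dsf}$, and \W{2}-membership follows immediately.

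For \W{2}-hardness, the key observation is that the hardness direction of Proposition~\ref{prop:l-casemod} already produces Boolean instances. The underlying \W{2}-hardness of $k$-step planning, due to B\"ackstr\"om et al., applies to the STRIPS fragment, i.e.\ to Boolean instances with complete goal states. The reduction given in the proof of Proposition~\ref{prop:l-casemod} only adds a single auxiliary variable $\star$ with a Boolean domain, so the resulting $\scasemod{\Lsf}$ instance remains Boolean. In any Boolean instance every variable has a domain of size at most~$2$, so the total number of values mentioned by $A' \subseteq A$ is at most $2$, a constant. Thus setting $d = 2$ turns the same reduction into a valid fpt-reduction to $\scasemod{\Lsf,\Dsf}$, which yields \W{2}-hardness.

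The only point that needs a careful check is that the cited \W{2}-hardness of $k$-step planning really does hold in the Boolean/STRIPS setting with complete goal states; this should be routine to verify from B\"ackstr\"om et al. Beyond that I do not anticipate any genuine obstacle, since the corollary is essentially the observation that Proposition~\ref{prop:l-casemod} and its underlying hardness source are already value-frugal enough to support the additional $\Dsf$ parameter for free.
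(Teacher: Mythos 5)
Your proposal is correct and follows essentially the same route as the paper, which likewise derives both directions of Corollary~\ref{cor:ld-casemod} from the proof of Proposition~\ref{prop:l-casemod} together with the fact that $k$-step planning is $\W{2}$-complete already for Boolean instances with complete goal states (so that the value count $d$ is constant and the enlarged parameter $k+d$ is harmless). Your write-up merely spells out in more detail the two observations the paper compresses into a single sentence.
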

\begin{proof}
The claim follows directly from
the proof of Proposition~\ref{prop:l-casemod},
since $k$-step planning is \W{2}-complete
already for Boolean planning instances
(\citealt{BackstroemJonssonChenOrdyniakSzeider12}).
\end{proof}

\noindent The above results together give us
the complete parameterized complexity characterization
as depicted in Figure~\ref{fig:lattice}.

\paragraph{Reusing an infix of the case}
As a slight generalization of the \casemod{} problem,
we consider the problem \casemodstar{}.
In this problem, we require not that the full plan $c$ from the case is being reused together with its initial state $J$, but that any infix $c'$ of the plan
(i.e., any subplan $c'$ resulting from removing any prefix and postfix from $c$) is reused with its corresponding initial state $J'$. Formally, the question becomes whether there exists a sequence of actions $(g_1,\dotsc,g_m) \in (A')^m$ for some $m \leq M$, and whether there exists some $0 \leq i \leq m$ and some $1 \leq i_1 \leq i_2 \leq l$ such that $(g_1,\dotsc,g_i,c_{i_1},\dotsc,c_{i_2},g_{i+1},\dotsc,g_m)$ solves the new planning instance $\Pi$ and $I[(g_1,\dotsc,g_i)] = J[(c_1,\dotsc,c_{i_1-1})]$, where $c = (c_1,\dotsc,c_l)$.

The following results
show that this generalization does not change the parameterized
complexity results that we obtained in the previous section.

\begin{observation}
Whenever $\rcasemod{}$ is in \FPT,
then also $\rcasemodstar{}$ is in \FPT.
\end{observation}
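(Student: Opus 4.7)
The plan is to reduce $\rcasemodstar{}$ to at most a polynomial number of calls to the assumed fixed-parameter tractable algorithm for $\rcasemod{}$, one call per candidate infix of the stored plan $c$. Since every infix is determined by a pair of indices $(i_1,i_2)$ with $1 \leq i_1 \leq i_2 \leq l$ (where $l$ is the length of $c$), there are only $O(l^2)$ such candidates, and $l$ is bounded by the input size.

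Concretely, given an instance $(\Pi,(\Pi',c),A',M)$ of $\rcasemodstar{}$ with $c = (c_1,\dotsc,c_l)$, I would iterate over all pairs $(i_1,i_2)$ with $1 \leq i_1 \leq i_2 \leq l$. For each pair, I would compute in polynomial time the state $J' = J[(c_1,\dotsc,c_{i_1-1})]$ by simulating the prefix of $c$ starting from $J$, form the infix plan $c' = (c_{i_1},\dotsc,c_{i_2})$, and construct the instance $(\Pi,(\Pi'',c'),A',M)$ of $\rcasemod{}$, where $\Pi''$ is the planning instance determined by the initial state $J'$ (the sets $V$ and $A$ are unchanged, and the choice of $H$ does not affect the question). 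By definition, the original $\rcasemodstar{}$ instance is a yes-instance if and only if at least one of these $\rcasemod{}$ instances is a yes-instance, so I would accept iff the assumed FPT algorithm accepts for some pair.

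The key observation about the parameter is that for every $R \subseteq \SBs \Lsf,\Vsf,\Dsf,\Asf \SEs$, the parameter depends only on $M$ and on the set $A'$, not on the stored plan $c$ or the initial state $J$. Consequently the parameter value is identical in every sub-call, so if $\rcasemod{}$ can be decided in time $f(k) n^{c}$, then the overall procedure runs in time $O(l^2 \cdot f(k) n^{c}) = f(k) n^{O(1)}$, establishing membership in \FPT{}. There is no serious obstacle here; the only thing to be careful about is to verify that none of the parameters in $R$ secretly depend on $c$ or $J$, which is immediate from the parameter definitions given earlier in the paper.
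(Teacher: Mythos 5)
Your proposal is correct and follows essentially the same route as the paper's proof: enumerate all $O(l^2)$ infixes of $c$, compute the corresponding initial state by simulating the prefix, and invoke the assumed \FPT{} algorithm for $\rcasemod{}$ on each resulting instance. Your additional remark that the parameter value is unchanged across sub-calls (since no parameter in $R$ depends on $c$ or $J$) is a worthwhile explicit check that the paper leaves implicit.
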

\begin{proof}
Let $c = (c_1,\dotsc,c_n) \in A^n$.
There are only $n^2$ different ways of selecting
subplans $(c_d,\dotsc,c_e)$ to consider,
for $1 \leq d \leq e \leq n$.
For each of these, we can compute the
initial state $J[(c_1,\dotsc,c_{d-1}]$ in linear time.
Simply trying all these $n^2$ possibilities
using the algorithm for $\rcasemod{}$
results in an fixed-parameter tractable algorithm for $\rcasemodstar{}$.
\end{proof}

\begin{theorem}
The completeness results in Proposition~\ref{prop:l-casemod}, Theorems~\ref{thm:v-casemod},~\ref{thm:d-casemod}~and~\ref{thm:lv-casemod} and Corollary~\ref{cor:ld-casemod}
also hold for the corresponding variants for $\rcasemodstar{}$.
\end{theorem}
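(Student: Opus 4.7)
The theorem combines two kinds of assertions: for each of the five results referenced, $\rcasemodstar$ is both hard for and a member of the same class as $\rcasemod$. I would handle these two directions separately.

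For the hardness direction, the plan is to observe that in each of the reductions used in the proofs of Proposition~\ref{prop:l-casemod}, Theorems~\ref{thm:v-casemod}, \ref{thm:d-casemod}, \ref{thm:lv-casemod}, and Corollary~\ref{cor:ld-casemod}, the case produced is exactly $(c,J)=(\epsilon,G)$. Since the only infix of the empty plan is $\epsilon$ itself, and since $J[(c_1,\dotsc,c_{i_1-1})]=J$ for any choice of $i_1$ when $c=\epsilon$, the resulting $\rcasemodstar$ instance is a yes-instance if and only if the underlying $\rcasemod$ instance is. Hence the very same fpt-reductions witness the hardness claims for $\rcasemodstar$, with no modification needed.

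For the membership direction, the plan is to adapt each original upper-bound argument by adding one ``shortcut'' gadget per candidate infix. Concretely, the W{[2]}-membership proof of Proposition~\ref{prop:l-casemod} reduces to $k$-step planning by adding a single operator $(J\cup\SBs\star\mapsto 0\SEs \Rightarrow J[c]\cup\SBs\star\mapsto 1\SEs)$; for $\rcasemodstar$ I would replace this single operator by the family $(J[(c_1,\dotsc,c_{i_1-1})]\cup\SBs\star\mapsto 0\SEs \Rightarrow J[(c_1,\dotsc,c_{i_2})]\cup\SBs\star\mapsto 1\SEs)$ indexed by pairs $1\le i_1\le i_2+1\le n+1$, keeping the length budget at $k'=k+1$. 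There are only $\bigO(n^2)$ such operators, each computable in polynomial time, and any solution plan must use exactly one of them, thereby committing to a specific infix. This yields W{[2]}-membership for $\scasemodstar{\Lsf}$ and (via binary instances) for $\scasemodstar{\Lsf,\Dsf}$. The same kind of polynomial-sized ``choose-the-infix'' extension can be grafted onto the first-order model-checking encoding underlying the W{[1]}-membership cited in the proof of Theorem~\ref{thm:lv-casemod}. Finally, the \paraPSPACE-membership for $\scasemodstar{\Dsf}$ is immediate, since one can iterate over the polynomially many infixes in reusable polynomial workspace while invoking the PSPACE upper bound of \cite{BackstromNebel96}.

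The main obstacle I anticipate is the W{[1]}-membership case: there the underlying reduction is not a direct planning encoding but a translation to model checking, so one has to verify that the additional shortcut operators can be expressed by polynomially many quantifier-free constraints over the same bounded variable scope as the original encoding. Since each new operator acts on the same $\Vsf$-bounded set of variables and contributes only a single additional choice, this should go through without blowing up the parameter beyond $k'+\bigO(1)$; once this is checked, correctness follows directly from the semantics of $\rcasemodstar$.
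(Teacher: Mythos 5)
Your hardness direction coincides with the paper's: every one of the cited hardness reductions produces the case $(c,J)=(\epsilon,G)$, so it transfers verbatim to $\rcasemodstar{}$. For membership you take a genuinely different route. The paper reduces $\rcasemodstar{}$ to the \emph{disjunction} of the polynomially many $\rcasemod{}$ instances obtained by fixing the infix $(c_{i_1},\dotsc,c_{i_2})$ and its induced initial state, and then argues that each target class tolerates such disjunctions: the $\W{\ensuremath{t}}$-memberships by combining the corresponding first-order model-checking instances into a single instance equivalent to the disjunction, and $\paraPSPACE{}$-membership by evaluating the disjunction in polynomial space. You instead fold the choice of infix into a \emph{single} instance by replacing the one shortcut operator from Proposition~\ref{prop:l-casemod} with the $\bigO(n^2)$ operators $(J[(c_1,\dotsc,c_{i_1-1})]\cup\SBs\star\mapsto 0\SEs \Rightarrow J[(c_1,\dotsc,c_{i_2})]\cup\SBs\star\mapsto 1\SEs)$, of which any solution plan must use exactly one. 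Both arguments are sound; yours is more concrete and self-contained for the $\W{2}$ cases (note that $\W{2}$-membership for $\scasemodstar{\Lsf,\Dsf}$ already follows from that for $\scasemodstar{\Lsf}$ by projecting away the extra parameter component, so no separate Boolean argument is needed there), while the paper's disjunction trick is more uniform and sidesteps exactly the re-verification you flag as delicate in the $\W{1}$ case --- there one only needs that a disjunction of polynomially many formulas of the right quantifier shape is again such a formula, rather than re-auditing the encoding with the new operators. One bookkeeping caveat for your construction: whether to include the operators with $i_1=i_2+1$ (empty infix) must match the problem definition --- if empty infixes are disallowed, such an operator would let a plan flip $\star$ without reusing any part of $c$; if they are allowed (as the $c=\epsilon$ hardness argument implicitly assumes), they must be included. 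This is a definitional alignment issue shared with the paper, not a gap in your argument.
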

\begin{proof}[sketch]
For the hardness results, it suffices to note that the hardness
proofs of these theorems
use a case containing the empty plan $\epsilon$.

For the membership results,
we note that the $\rcasemodstar{}$ problem
can be solved by answering the disjunction of
polynomially many (independent) $\rcasemod{}$ instances.
The \W{\ensuremath{t}}-membership results can then be proved
by encoding the $\rcasemod{}$ instances as
instances of certain first-order model checking problems
(\citealt{BackstroemJonssonChenOrdyniakSzeider12}),
and combining these into
one model checking problem instance
that is equivalent to the disjunction of the
separate $\rcasemod{}$ instances.
%
%
For the \paraPSPACE{}-membership result,
we can straightforwardly evaluate the disjunction of the $\rcasemod{}$ instances
in polynomial space.
\end{proof}

\paragraph{Generalized infix reuse}
The problem of $\casemod{}$ can be generalized even further by reusing an infix of the stored solution plan $c$ from \emph{any state} that satisfies the preconditions of the plan infix.
For this problem, the instances coincide with those of $\casemod{}$, but the question is:

\begin{quote}
  \emph{Question:} Does there exist a sequence of actions $(g_1,\dotsc,g_m) \in (A')^m$
  for some $m \leq M$, and does there exist $0 \leq i \leq m$ and  $0~\leq~i_1~\leq~i_2~\leq~l$
  such that $(g_1,\dotsc,g_i,c_{i_1},\dotsc,c_{i_2},g_{i+1},\dotsc,g_m)$
  is a solution plan for $\Pi$ and
  for all $i_1 \leq j \leq i_2$ we have that action $c_j$ is applicable in $I[(g_1,\dotsc,g_i,c_{i_1},\dotsc,c_{j-1})]$?
  
\end{quote}

\noindent This generalization does not change the parameterized complexity results stated in Table~\ref{table:results}.
In all cases where $\rcasemod{}$ is fixed-parameter tractable, we can
obtain a fixed-parameter tractable algorithm to solve the above problem,
since in those cases we can enumerate all states reachable from a given state
in fixed-parameter tractable time.
For the fixed-parameter intractability results,
the hardness follows straightforwardly
from the hardness proofs
for the corresponding $\rcasemod{}$ problems. 

\paragraph{Reusing a sequence of actions}
In principle, there is no need to require anything from the state to which the stored plan is applied.
Therefore we will consider the following generalization
of the $\casemod{}$ problems discussed above.
In this problem, denoted by $\planmod$,
we remove the requirement that
the additional steps added before the plan $c$
result in the initial state $J$ or some other state that satisfies the preconditions of (the infix of) the plan $c$.
Also, we allow the insertion of additional steps
in the middle of the plan $c$.
Formally, the question then becomes whether
there exists some $m \leq k$,
a sequence of actions $g = (g_1,\dotsc,g_m) \in A^{m}$,
and some sequence of actions $p = (p_1,\dotsc,p_{l+m})$
such that $p$ is a solution plan of $\Pi$
and $p$ can be divided into two subsequences
$c$ and $g$, i.e.,
interleaving $c$ and $g$ yields $p$.
In other words, the additional steps $g$
can be used anywhere before, after or in the middle of the plan $c$.
Since we do not restrict ourselves
to any particular state being visited in our solution plan,
the actions in the glue sequence $g$
can be used anywhere before, after or in the middle of the plan $c$.
Similarly to the case for $\casemod$,
we define the variants $\planmodstar{}$, $\rplanmod{}$
and $\rplanmodstar{}$.
In the following, we show that all
variants of $\planmod$
are fixed-parameter intractable.

\begin{theorem}
$\splanmod{\Lsf,\Vsf,\Dsf,\Asf}$ is \W{P}-complete.
\end{theorem}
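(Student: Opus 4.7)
The plan is to prove both \W{P}-membership and \W{P}-hardness; the latter is the harder direction and I will reduce from \pwsatcirc{}. The central observation I will exploit is that in \planmod{} the glue sequence is interleaved \emph{anywhere} into $c$, so the position of each glue action is itself a nondeterministic bit of information.

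For membership in \W{P}, I will use the machine-based characterization that a parameterized problem lies in \W{P} if it can be decided by an FPT-time algorithm that uses at most $f(k)\log n$ nondeterministic bits. Writing $k = M + |A'| + |V_{A'}| + |D_{A'}|$, I nondeterministically guess the length $m \leq M$, the sequence $g = (g_1,\dotsc,g_m)\in (A')^m$ (using $m\log|A'| \leq k\log k$ bits), and the positions at which to splice the $g_i$ into $c$ (using $m\log(m+|c|) = O(k \log n)$ bits). Deterministic polynomial-time verification of the resulting interleaved plan then suffices, giving the $O(f(k)\log n)$ bound on nondeterministic choices.

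For hardness, given an instance $(C,k)$ of \pwsatcirc{} with circuit inputs $x_1,\dotsc,x_n$, I build a planning instance whose variables include a counter $\mathsf{cntr}$ of domain $\{0,\dotsc,n\}$, a Boolean trigger $\mathsf{do}$, the Boolean inputs $x_1,\dotsc,x_n$, and auxiliary variables for simulating $C$. The plan $c$ has two segments. The first is a preamble of $2n$ actions $c_1^{\mathsf{skip}}, c_1^{\mathsf{set}},\dotsc,c_n^{\mathsf{skip}}, c_n^{\mathsf{set}}$, where $c_i^{\mathsf{skip}}$ has precondition $\{\mathsf{cntr}\mapsto i-1,\mathsf{do}\mapsto 0\}$ and merely advances the counter, and $c_i^{\mathsf{set}}$ has precondition $\{\mathsf{cntr}\mapsto i-1,\mathsf{do}\mapsto 1\}$ and advances the counter while setting $x_i\mapsto 1$ and clearing $\mathsf{do}$. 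The second segment deterministically simulates $C$ on $x_1,\dotsc,x_n$ and the goal forces the circuit output to $1$. I take $A'=\{g\}$ with $g=(\emptyset \Rightarrow \{\mathsf{do}\mapsto 1\})$ and $M=k$; since $|A'|=1$, $|V_{A'}|=1$ and $|D_{A'}|=O(1)$, the combined parameter is $k + O(1)$, making the reduction fpt.

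The main obstacle is verifying by case analysis that, regardless of where the $\leq k$ copies of $g$ land, their net effect is exactly to set some subset of at most $k$ inputs to $1$. The three cases to handle are: glue placed inside the simulation segment (no simulation action mentions $\mathsf{do}$, so this is a pure no-op), glue placed between $c_i^{\mathsf{skip}}$ and $c_i^{\mathsf{set}}$ (the counter guard blocks $c_i^{\mathsf{set}}$ from firing, and the effect shifts harmlessly to $c_{i+1}^{\mathsf{set}}$, which is still a valid selection), and two consecutive glues (idempotent on $\mathsf{do}$, wasting one). Once these no-op/shift observations are established, the effective glue placements are in bijection with weight-$\leq k$ assignments to $x_1,\dotsc,x_n$, and the planning instance has a solution iff $C$ has such a satisfying assignment, completing the reduction.
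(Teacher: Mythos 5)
Your proposal is correct and follows essentially the same route as the paper: membership via guessing the $\leq k$ glue actions and their insertion positions with $O(k\log n)$ nondeterministic bits, and hardness from \pwsatcirc{} using a single always-applicable trigger action in $A'$ whose insertion positions into a carefully structured plan $c$ select which $\leq k$ circuit inputs are set to true, followed by a deterministic in-plan evaluation of the circuit. The only cosmetic difference is in the gadget that resets the trigger (your counter with skip/set pairs versus the paper's explicit ``off'' actions interleaved into $c$).
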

\begin{proof}[sketch]
For \W{P}-membership, we sketch how to reduce $\splanmod{\Lsf,\Vsf,\Dsf,\Asf}$
to the problem of determining whether a nondeterministic Turing machine $T$
accepts the empty string within a bounded number of steps
using at most $k'$ nondeterministic steps
(parameterized by $k'$).
Since this parameterized halting problem is in \W{P}
(\citealt{Cesati03}),
this suffices to show \W{P}-membership.
First $T$ guesses $k$ pairs $(m_i,a_i)$,
for $0 \leq m_i \leq \Card{c}$ and $a_i \in A'$.
Pair $(m_1,a_1)$ corresponds to the application of the first $m_1$
actions from the given plan $c$, followed by the application of action $a_i$.
Similarly, for each $i > 1$, pair $(m_i,a_i)$ corresponds
to the application of the next $m_i$ actions
from the given plan $c$, followed by the application of action $a_i$.
Then $T$ (deterministically) verifies whether applying the plan corresponding
to $(m_1,a_i),\dotsc,(m_k,a_k)$ is a solution plan.

To prove \W{P}-hardness, we reduce from \pwsatcirc{}.
Let $C$ be a circuit for which we want to
check whether there exists a satisfying
assignment of weight at most $k$.
Let $x_1,\dotsc,x_n$ be the input nodes,
$y_1,\dotsc,y_m$ the internal nodes,
and $z$ the output node of $C$,
together denoted $\mtext{nodes}(C)$.
We assume without loss of generality
that $C$ contains only {\sc and} and {\sc negation} nodes.
Since $C$ is acyclic, we let the sequence $(g_1,\dotsc,g_l)$
denote the nodes of $C$ in any order such that
for each $g_i$ we have that $j < i$ for all input nodes $g_j$ of $g_i$.
We construct an instance of $\splanmod{\Lsf,\Vsf,\Dsf,\Asf}$
consisting of a planning instance $\Pi = (V,I,G,A)$,
a plan $c$, and an integer $k'$.
We let $k' = k$, and we define:
\[ \begin{array}{r l}
  V = & \mtext{nodes}(C) \cup \SBs \sigma \SEs; \\
  D(v) = & \SBs \false, \true \SEs \mtext{ for all $v \in V$}; \\
  I = & \SB v \mapsto 0 \SM v \in V \SE; \\
  G = & \SBs z \mapsto \true \SEs; \\
  A = & \SBs a^{x_1}, \dotsc, a^{x_n} \SEs \cup \SBs a^{\mtext{on}}, a^{\mtext{off}} \SEs \cup \SBs a^{g_1}, \dotsc, a^{g_l} \SEs; \\
  A' = & \SBs a^{\mtext{on}} \SEs; \\
  a^{x_i} = & (\SBs \sigma \mapsto \true \SEs \Rightarrow \SBs x_i \mapsto \true \SEs) \mtext{ for each $x_i$}; \\
  a^{\mtext{on}} = & (\emptyset \Rightarrow \SBs \sigma \mapsto \true \SEs); \\
  a^{\mtext{off}} = & (\emptyset \Rightarrow \SBs \sigma \mapsto \false \SEs); \mtext{ and} \\
  c = & (a^{\mtext{off}},a^{x_1},a^{\mtext{off}},a^{x_2},\dotsc,a^{\mtext{off}},a^{x_n},a^{g_1},\dotsc,a^{g_l}). \\
\end{array} \]
We define $a^{g_i}$ for each $g_i$ as follows.
If $g_i$ is a {\sc negation} node with input $y$,
we let $a^{g_i} = (\SBs y \mapsto \false \SEs \Rightarrow \SBs g_i \mapsto \true \SEs)$.
If $g_i$ is an {\sc and} node with inputs $y_1,\dotsc,y_u$,
we let $a^{g_i} = (\SBs y_1 \mapsto \true, \dotsc, y_u \mapsto \true \SEs \Rightarrow \SBs g_i \mapsto \true \SEs)$.

It is now straightforward to verify that $(\Pi,c,k')$ is
a yes-instance of $\splanmod{\Lsf,\Vsf,\Dsf,\Asf}$ if and only if
the circuit $C$ has a satisfying assignment of weight $k$.
\end{proof}

\noindent Note that the proof of the above theorem
suffices to show fixed-parameter intractability
of all variants of the $\planmod$
and $\planmodstar$ problems.
We also point out that this fixed-parameter intractability
result holds even when the problem is restricted to instances
for which the entire set $A$ of actions
satisfies post-uniqueness.



\section{Conclusion}
  
  We provided theoretical results, using the framework of parameterized complexity,
  to identify situations in which plan reuse is provably tractable.
  We drew a detailed map of the parameterized complexity landscape
  of several variants of problems that arise in the context of
  case-based planning. In particular, we considered the problem of
  reusing an existing plan, imposing various restrictions in terms of
  parameters, such as the number of steps that can be added to the
  existing plan to turn it into a solution of the planning instance at
  hand.

  The results show that contrary to the common belief, the fact that
  the number of modifying actions is small
  does not guarantee tractability on its own. We additionally need to
  restrict the set of actions that can
  participate in the modifications.
  This indicates the need for a good heuristic function that identifies
  a limited set of actions used for modifications.

  In the future, these results may be extended to richer planning
  formalisms, e.g., considering variables of
  different types or using predicates to express certain properties
  related to a planning domain rather than
  planning instance.

\cleardoublepage\pagebreak
\bibliographystyle{aaai}

\begin{thebibliography}{}

\bibitem[\protect\citeauthoryear{Aamodt and Plaza}{1994}]{Aamodt94}
Aamodt, A., and Plaza, E.
\newblock 1994.
\newblock Case-based reasoning: foundational issues, methodological variations,
  and system approaches.
\newblock {\em AI Communications} 7(1):39--59.

\bibitem[\protect\citeauthoryear{B{\"a}ckstr{\"o}m and
  Klein}{1991}]{BackstromKlein91}
B{\"a}ckstr{\"o}m, C., and Klein, I.
\newblock 1991.
\newblock Planning in polynomial time: the {SAS-PUBS} class.
\newblock {\em Computational Intelligence} 7:181--197.

\bibitem[\protect\citeauthoryear{B{\"a}ckstr{\"o}m and
  Nebel}{1996}]{BackstromNebel96}
B{\"a}ckstr{\"o}m, C., and Nebel, B.
\newblock 1996.
\newblock Complexity results for {SAS+} planning.
\newblock {\em Computational Intelligence} 11:625--655.

\bibitem[\protect\citeauthoryear{B{\"a}ckstr{\"o}m \bgroup et al\mbox.\egroup
  }{2012}]{BackstroemJonssonChenOrdyniakSzeider12}
B{\"a}ckstr{\"o}m, C.; Chen, Y.; Jonsson, P.; Ordyniak, S.; and Szeider, S.
\newblock 2012.
\newblock The complexity of planning revisited - a parameterized analysis.
\newblock In {\em Twenty-Sixth AAAI Conference on Artificial Intelligence}.

\bibitem[\protect\citeauthoryear{B{\"a}ckstr{\"o}m \bgroup et al\mbox.\egroup
  }{2013}]{BackstroemJonssonOrdyniakSzeider13}
B{\"a}ckstr{\"o}m, C.; Jonsson, P.; Ordyniak, S.; and Szeider, S.
\newblock 2013.
\newblock Parameterized complexity and kernel bounds for hard planning
  problems.
\newblock In Spirakis, P., and Serna, M., eds., {\em Algorithms and Complexity
  (CIAC 2013)}, volume 7878 of {\em Lecture Notes in Computer Science}.

\bibitem[\protect\citeauthoryear{Bodlaender \bgroup et al\mbox.\egroup
  }{1995}]{BodlaenderDowneyFellowsWareham95}
Bodlaender, H.~L.; Downey, R.~G.; Fellows, M.~R.; and Wareham, H.~T.
\newblock 1995.
\newblock The parameterized complexity of sequence alignment and consensus.
\newblock {\em Theoretical Computer Science} 147:31--54.

\bibitem[\protect\citeauthoryear{Bylander}{1994}]{Bylander94}
Bylander, T.
\newblock 1994.
\newblock The computational complexity of propositional {STRIPS} planning.
\newblock {\em Artificial Intelligence} 69:165--204.

\bibitem[\protect\citeauthoryear{Cesati}{2003}]{Cesati03}
Cesati, M.
\newblock 2003.
\newblock The {Turing} way to parameterized complexity.
\newblock {\em Journal of Computer and System Sciences} 67:654--685.

\bibitem[\protect\citeauthoryear{Chen and Gim{\'e}nez}{2010}]{ChenGimenez10}
Chen, H., and Gim{\'e}nez, O.
\newblock 2010.
\newblock Causal graphs and structurally restricted planning.
\newblock {\em Journal of Computer and System Sciences} 76(7):579--592.

\bibitem[\protect\citeauthoryear{Downey and Fellows}{1995}]{DowneyFellows95}
Downey, R.~G., and Fellows, M.~R.
\newblock 1995.
\newblock Fixed-parameter tractability and completeness {I}: Basic results.
\newblock {\em SIAM J. Comput.} 24(4):873--921.

\bibitem[\protect\citeauthoryear{Downey and Fellows}{1999}]{DowneyFellows99}
Downey, R.~G., and Fellows, M.~R.
\newblock 1999.
\newblock {\em Parameterized Complexity}.
\newblock Monographs in Computer Science. New York: Springer Verlag.

\bibitem[\protect\citeauthoryear{Downey, Fellows, and
  Stege}{1999}]{DowneyFellowsStege99}
Downey, R.; Fellows, M.~R.; and Stege, U.
\newblock 1999.
\newblock Parameterized complexity: A framework for systematically confronting
  computational intractability.
\newblock In {\em Contemporary Trends in Discrete Mathematics: From DIMACS and
  DIMATIA to the Future}, volume~49 of {\em AMS-DIMACS},  49--99.
\newblock American Mathematical Society.

\bibitem[\protect\citeauthoryear{Fellows \bgroup et al\mbox.\egroup
  }{2009}]{FellowsHermelinRosamondVialette09}
Fellows, M.~R.; Hermelin, D.; Rosamond, F.~A.; and Vialette, S.
\newblock 2009.
\newblock On the parameterized complexity of multiple-interval graph problems.
\newblock {\em Theoretical Computer Science} 410(1):53--61.

\bibitem[\protect\citeauthoryear{Flum and Grohe}{2003}]{FlumGrohe03}
Flum, J., and Grohe, M.
\newblock 2003.
\newblock Describing parameterized complexity classes.
\newblock {\em Information and Computation} 187(2):291--319.

\bibitem[\protect\citeauthoryear{Flum and Grohe}{2006}]{FlumGrohe06}
Flum, J., and Grohe, M.
\newblock 2006.
\newblock {\em Parameterized Complexity Theory}, volume XIV of {\em Texts in
  Theoretical Computer Science. An EATCS Series}.
\newblock Berlin: Springer Verlag.

\bibitem[\protect\citeauthoryear{Ghallab, Nau, and
  Traverso}{2004}]{ghallab2004automated}
Ghallab, M.; Nau, D.; and Traverso, P.
\newblock 2004.
\newblock {\em Automated Planning: Theory \& Practice}.
\newblock Morgan Kaufmann.

\bibitem[\protect\citeauthoryear{Gottlob and Szeider}{2006}]{GottlobSzeider08}
Gottlob, G., and Szeider, S.
\newblock 2006.
\newblock Fixed-parameter algorithms for artificial intelligence, constraint
  satisfaction, and database problems.
\newblock {\em The Computer Journal} 51(3):303--325.
\newblock Survey paper.

\bibitem[\protect\citeauthoryear{Hanks and Weld}{1995}]{HanWel95}
Hanks, S., and Weld, D.
\newblock 1995.
\newblock A domain-independent algorithm for plan adaptation.
\newblock {\em Journal of Artificial Intelligence Research (JAIR)} 2:319--360.

\bibitem[\protect\citeauthoryear{Helmert}{2006}]{helmert2006fast}
Helmert, M.
\newblock 2006.
\newblock The fast downward planning system.
\newblock {\em Journal of Artificial Intelligence Research} 26(1):191--246.

\bibitem[\protect\citeauthoryear{Hoffmann}{2001}]{hoffmann2001ff}
Hoffmann, J.
\newblock 2001.
\newblock Ff: The fast-forward planning system.
\newblock {\em AI magazine} 22(3):57.

\bibitem[\protect\citeauthoryear{Kambhampati and Hendler}{1992}]{KamHen92}
Kambhampati, S., and Hendler, J.~A.
\newblock 1992.
\newblock A validation-structure-based theory of plan modification and reuse.
\newblock {\em Artificial Intelligence} 55:193--258.

\bibitem[\protect\citeauthoryear{Leake}{1996}]{Leake96}
Leake, D.~B., ed.
\newblock 1996.
\newblock {\em Case-Based Reasoning}.
\newblock Cambridge, Massachusetts: The {MIT} Press.

\bibitem[\protect\citeauthoryear{Liberatore}{2005}]{Liberatore05}
Liberatore, P.
\newblock 2005.
\newblock On the complexity of case-based planning.
\newblock {\em Journal of Experimental \& Theoretical Artificial Intelligence}
  17(3):283--295.

\bibitem[\protect\citeauthoryear{Nebel and Koehler}{1995}]{Nebel95}
Nebel, B., and Koehler, J.
\newblock 1995.
\newblock Plan reuse versus plan generation: A complexity-theoretic
  perspective.
\newblock {\em Artificial Intelligence- Special Issue on Planning and
  Scheduling} 76:427--454.

\bibitem[\protect\citeauthoryear{Niedermeier}{2006}]{Niedermeier06}
Niedermeier, R.
\newblock 2006.
\newblock {\em Invitation to Fixed-Parameter Algorithms}.
\newblock Oxford Lecture Series in Mathematics and its Applications. Oxford:
  Oxford University Press.

\bibitem[\protect\citeauthoryear{Spalazzi}{2001}]{Spalazzi2001}
Spalazzi, L.
\newblock 2001.
\newblock A survey on case-based planning.
\newblock {\em Artificial Intelligence Review} 16(1):3--36.

\bibitem[\protect\citeauthoryear{Tonidandel and Rillo}{2002}]{TonidandelR02}
Tonidandel, F., and Rillo, M.
\newblock 2002.
\newblock The {FAR-OFF} system: A heuristic search case-based planning.
\newblock In Ghallab, M.; Hertzberg, J.; and Traverso, P., eds., {\em AIPS},
  302--311.
\newblock AAAI.

\bibitem[\protect\citeauthoryear{Veloso}{1994}]{Veloso:1994:PLA}
Veloso, M.
\newblock 1994.
\newblock {\em Planning and Learning by Analogical Reasoning}, volume 886 of
  {\em Lecture Notes in Artificial Intelligence and Lecture Notes in Computer
  Science}.
\newblock New York, USA: Springer-Verlag Inc.

\end{thebibliography}

\end{document}